\documentclass[sigconf]{acmart}

\usepackage{makecell}
\usepackage{amsmath,amssymb,amsfonts}
\usepackage{amsthm}
\usepackage{subcaption}
\usepackage{colortbl}
\usepackage{multirow}
\usepackage{xcolor}
\usepackage{enumitem}
\usepackage{graphicx}
\usepackage{url}
\usepackage[framemethod=tikz]{mdframed}
\definecolor{mygray}{RGB}{243,243,244}
\newmdenv[
  innertopmargin=0pt,
  backgroundcolor=mygray,
  linecolor=none,
  innerleftmargin=0pt,
  innerrightmargin=0pt,
  leftmargin=0pt
  ]{mymath}
\AtBeginDocument{%
  }

\setcopyright{acmlicensed}
\copyrightyear{2026}
\acmYear{2026}
\setcopyright{cc}
\setcctype{by}
\acmConference[WWW '26]{Proceedings of the ACM Web Conference 2026}{April 13--17, 2026}{Dubai, United Arab Emirates}
\acmBooktitle{Proceedings of the ACM Web Conference 2026 (WWW '26), April 13--17, 2026, Dubai, United Arab Emirates}
\acmPrice{}
\acmDOI{10.1145/3774904.3792435}
\acmISBN{979-8-4007-2307-0/2026/04}

\begin{document}

\title{Heterophily-Agnostic Hypergraph Neural Networks with Riemannian Local Exchanger}

\author{Li Sun}
\email{lsun@bupt.edu.cn}
\orcid{0000-0003-4562-2279}
\authornote{Corresponding Author.}
\affiliation{%
  \institution{Beijing University of Posts and Telecommunications}
  \city{Beijing}
  \country{China}
}

\author{Ming Zhang}
\email{zhangming@ncepu.edu.cn}
\affiliation{%
  \institution{North China Electric Power University}
  \city{Beijing}
  \country{China}}

\author{Wenxin Jin}
\email{jwx2265714977@outlook.com}
\affiliation{%
  \institution{North China Electric Power University}
  \city{Beijing}
  \country{China}}

\author{Zhongtian Sun}
\email{Z.Sun-256@kent.ac.uk}
\affiliation{%
 \institution{University of Kent}
 \city{Canterbury}
 \country{United Kingdom}}

\author{Zhenhao Huang}
\email{huangzhenhao@ncepu.edu.cn}
\affiliation{%
  \institution{North China Electric Power University}
  \city{Beijing}
  \country{China}}

\author{Hao Peng}
\email{penghao@buaa.edu.cn}
\affiliation{%
  \institution{Beihang University}
  \city{Beijing}
  \country{China}}

\author{Sen Su}
\email{susen@bupt.edu.cn}
\affiliation{%
  \institution{Beijing University of Posts and Telecommunications}
  \city{Beijing}
  \country{China}
}

\author{Philip Yu}
\email{psyu@uic.edu}
\affiliation{%
  \institution{University of Illinois}
  \city{Chicgao}
  \country{United States}}

\renewcommand{\shortauthors}{Li Sun et al.}

\begin{abstract}
Hypergraphs are the natural description of higher-order interactions among objects, widely applied in social network analysis, cross-modal retrieval, etc. Hypergraph Neural Networks (HGNNs) have become the dominant solution for learning on hypergraphs. Traditional HGNNs are extended from message passing graph neural networks, following the homophily assumption, and thus struggle with the prevalent heterophilic hypergraphs that call for long-range dependence modeling. Existing solutions enlarge the message flow through the hypergraph bottleneck, mitigating the oversquashing issue and capturing long-range dependence. However, they often accelerate the loss of representation distinguishability in the 
 repeated aggregations, leading to oversmoothing. This dilemma motivates  an interesting question: \textbf{Can we develop a unified mechanism that is agnostic to both homophilic and heterophilic hypergraphs?}  In this paper, we achieve the best of both worlds through the lens of Riemannian geometry, which provides the potential to adjust the message passing behavior in different regions. The key insight lies in the connection between oversquashing and hypergraph bottleneck within the framework of Riemannian manifold heat flow. Building on this, we propose the novel idea of locally adapting the bottlenecks of different subhypergraphs. The core innovation of the proposed mechanism is the design of an \textbf{adaptive local (heat) exchanger}. Specifically, it captures the rich long-range dependencies via the Robin condition, and preserves the representation distinguishability via source terms, thereby enabling heterophily-agnostic message passing with theoretical guarantees. Based on this theoretical foundation, we present a novel \textbf{H}eat-\textbf{E}xchanger with \textbf{A}daptive \textbf{L}ocality for Hypergraph Neural Network (\textbf{\texttt{HealHGNN}}), designed as a node-hyperedge bidirectional systems with linear complexity in the number of nodes and hyperedges. Extensive experiments on both homophilic and heterophilic cases show that \texttt{HealHGNN} achieves the state-of-the-art performance.
\end{abstract}


\begin{CCSXML}
<ccs2012>
   <concept>
       <concept_id>10010147</concept_id>
       <concept_desc>Computing methodologies</concept_desc>
       <concept_significance>500</concept_significance>
       </concept>
   <concept>
       <concept_id>10002951.10003260.10003277</concept_id>
       <concept_desc>Information systems~Web mining</concept_desc>
       <concept_significance>500</concept_significance>
       </concept>
   <concept>
       <concept_id>10010147.10010257.10010293.10010294</concept_id>
       <concept_desc>Computing methodologies~Neural networks</concept_desc>
       <concept_significance>500</concept_significance>
       </concept>
 </ccs2012>
\end{CCSXML}

\ccsdesc[500]{Computing methodologies}
\ccsdesc[500]{Information systems~Web mining}
\ccsdesc[500]{Computing methodologies~Neural networks}

\keywords{Hypergraph Neural Network, Heterophily, Riemannian Geometry, Long-range Dependency, Manifold Heat Flow.}


\maketitle
\vspace{-0.35cm}
\section{INTRODUCTION}

Hypergraphs have emerged as a powerful tool for capturing higher-order interactions that involve more than two objects, and routinely find themselves in crucial applications, such as social network analysis and cross-modal retrieval~\cite{23survey,24survey}. Hypergraph Neural Networks (HGNNs) have become the dominant solution for learning on hypergraphs. HGNNs \cite{AAAI19-hgnn} are typically extended from the message-passing graph neural networks, such as GCN ~\cite{17gcn} and GAT ~\cite{17gat}, using hypergraph Laplacian, and thus work under the homophily assumption that connections are formed between nodes of similar labels. Also, these methods inherit a fundamental drawback of conventional message-passing mechanism: oversmoothing (OSM)~\cite{arxiv22-deephgnn}, where node representations become indistinguishable after repeated message aggregations, and oversquashing (OSQ)~\cite{logc25-SensHNN}, where signals from distant nodes are compressed.

So far, HGNNs~\cite{ICLR22-edhnn,arxiv21-hgnn-uniGCNII,NIPS23-sheafhypergnn} are primarily designed for homophilic hypergraphs, while heterophilic hypergraphs are also ubiquitous in the real-world scenarios~\cite{aaai25-hhl}. Learning on heterophilic hypergraphs necessitates the interaction between distant nodes that can be belonged to the same class. To capture long-range dependence, existing solutions~\cite{AAAI25-KHGNN,logc25-SensHNN} make effort to enlarge the message flow through the hypergraph bottleneck, where messages are often squashed. However, they often accelerate the loss of representation distinguishability during repeated aggregations, as the drawback of oversmoothing is typically inherited. This dilemma motivates an interesting question: \textbf{Can we develop a unified mechanism that is agnostic to both homophilic and heterophilic hypergraphs?}

In this paper, we achieve the best of both worlds through the lens of Riemannian geometry. Our study begins with an interesting theoretical insight: oversquashing is inherently related to hypergraph bottleneck. Concretely, within the framework of Riemannian heat flow, hypergraph message passing is given as the heat flow governed by the Laplace–Beltrami operator. The heat kernel’s spectral decomposition, together with Cheeger–Buser's Inequality, provides the fact that spectral gap $\lambda$ is connected to hypergraph bottlenecks. Then, we rigorously prove that the increase of $\lambda$ enlarges the bottleneck, promoting long-range propagation, meanwhile accelerating the decay of Dirichlet energy. That is, the increase of $\lambda$ facilitates message flow but leads to severe oversmoothing.

Besides providing a systematic understanding of the dilemma, Riemannian geometry also enables the potential for heterophily-agnostic message passing. Traditional Euclidean spaces often struggles with seeking a single, tradeoff value of $\lambda$, given a global, uniform geometry everywhere in its space. On the contrary, a general Riemannian manifold allows for diversified geometries and behaviors in different regions, and thus affords more flexible, localized control: by treating a subhypergraph as a submanifold, we can modulate its spectral gap in each submanifold to regulate information flow adaptively, allowing for the locally adjustment of different regions. Herein, we visit the concept of boundary conditions and source terms, and construct a novel \textbf{Adaptive Local Exchanger}. Specifically, it performs local bottleneck control in sprit of Robin condition, and preserves the representation distinguishability through the energy sustainment of source terms, thereby generating informative representations with theoretical guarantees.

Based on the aforementioned construction, we propose a novel \textbf{H}eat-\textbf{E}xchanger with \textbf{A}daptive \textbf{L}ocality for Hypergraph Neural Networks, referred to as \textbf{\texttt{(HealHGNN)}}, as a simple yet effective neural implementation of the new message passing mechanism with Adaptive Local Exchanger. As shown in Figure~\ref{fig:casestudy}, our approach adaptively controls bottlenecks locally and retards the Dirichlet Energy from vanishing, so that fine-grained local adjustments translate into effective global propagation. Furthermore, reflecting the two-stage message passing, we model nodes and hyperedges as coupled systems with bidirectional interactions, capturing richer high-order dependencies. Finally, we derive a clean node-wise formulation of the updating rule, allowing for a satisfactory scalability with the linear computational complexity with respect to the graph size. 
\begin{figure}[t]
\centering
    \vspace{-0.1in}
\includegraphics[width=0.95\linewidth]{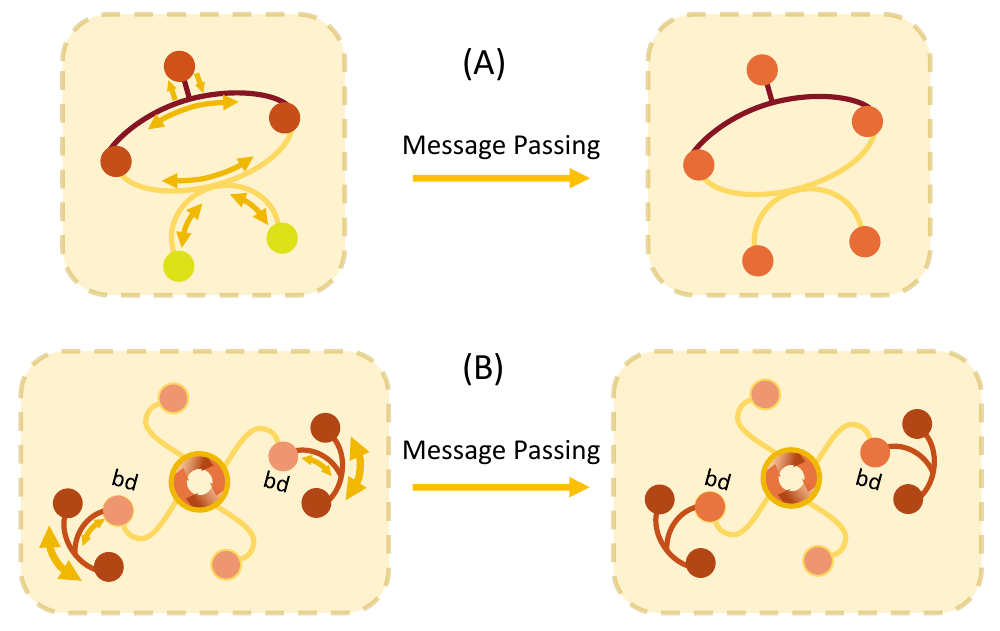}
    \vspace{-0.1in}
        \caption{(A). Traditional dynamics in HGNN. (B). Illustration of our adaptive heat exchanger; bd denotes boundary nodes.}
    \label{fig:casestudy}
    \vspace{-0.25in}
\end{figure}

In summary, key contributions of our work are listed as follows,
\begin{itemize}
    \item We rethink the heterophily of hypergraph representation learning, and introduce a unified framework of Riemannian heat flow to an interesting problem of \textbf{heterophily-agnostic hypergraph neural network design}.
    \item The key insight is that we discover the inherent connection between oversquashing and hypergraph bottlenecks through the Riemannian geometry perspective, so that we develop an \textbf{adaptive local exchanger} for the new message-passing mechanism that is agnostic to both homophilic and heterophilic hypergraphs.
    \item Based on the proposed mechanism, we propose a novel \textbf{\texttt{HealHGNN}}, a node-hyperedge bidirectional system, which achieves heterophily-agnostic message passing with linear complexity in the number of nodes plus edges and shows superior performance on a variety of real-world homophilic and heterophilic hypergraphs.
\end{itemize}

\begin{figure*}[h]
\centering
    \includegraphics[width=0.95\linewidth]{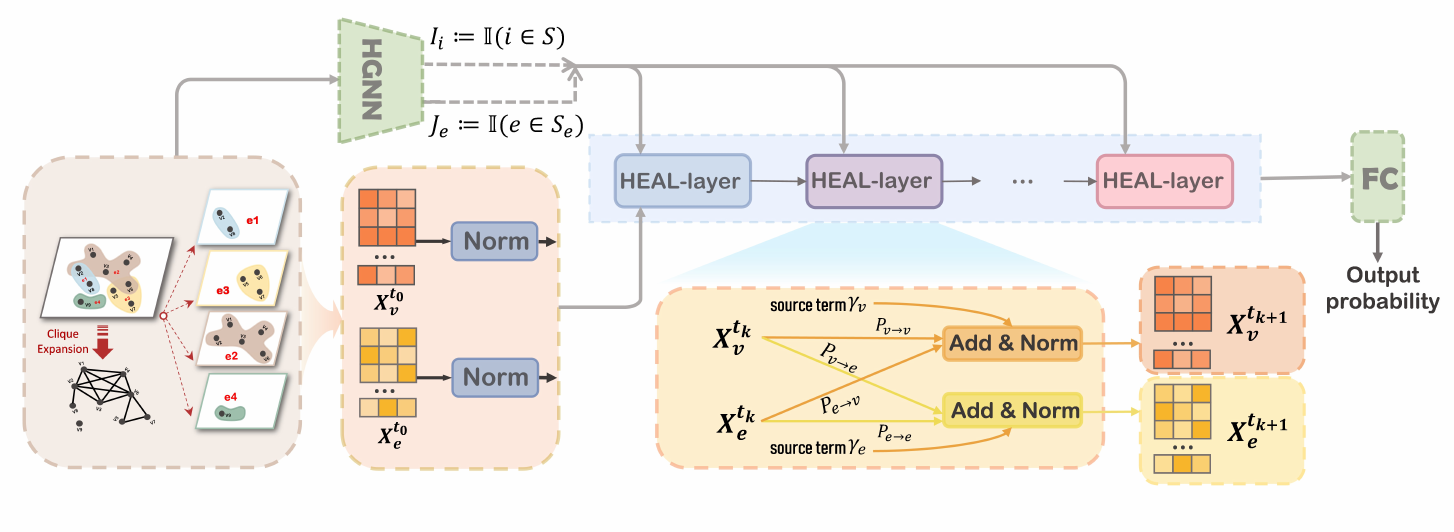}
    \vspace{-0.25in}
        \caption{Overall architecture of the proposed heat kernel based hypergraph neural network.}
    \label{fig:overall}
    \vspace{-0.1in}
\end{figure*}

\section{PRELIMINARY}
\label{headings}
This section introduces the basic concepts of hypergraphs and hypergraph learning, defines the hypergraph Laplacian, and presents two quantitative metrics: the Cheeger-constant–based bottleneck indicator for oversquashing and the Dirichlet-energy measure of global inter-node feature disparity for oversmoothing. Important notations are summarized in Appendix~\ref{notation}.

\subsubsection*{\textbf{Hypergraph.}} A hypergraph $\mathcal{G} = (\mathcal{V}, \mathcal{E}, \mathcal{H})$ generalizes the concept of a simple graph by allowing each hyperedge to connect an arbitrary subset of vertices. Here, $\mathcal{V}$ denotes the vertex set, and $\mathcal{E}$ denotes the hyperedge set. The connectivity structure can be represented by an incidence matrix $\mathcal{H} \in \{0,1\}^{|\mathcal{V}| \times |\mathcal{E}|}$, where $\mathcal{H}_{v,e}=1$ if $v \in e$, and $\mathcal{H}_{v,e} = 0$ otherwise.  Let $\mathcal{D}_{\mathcal{V}} \in \mathbb{R}_{\ge 0}^{|\mathcal{V}|\times|\mathcal{V}|}$ and $\mathcal{D}_{\mathcal{E}} \in \mathbb{R}_{\ge 0}^{|\mathcal{E}|\times|\mathcal{E}|}$ be the diagonal degree matrices of vertices and hyperedges, where $\mathcal{D}^{\mathcal{V}}_{i,i}$ counts hyperedges containing $v_i$, and $\mathcal{D}^{\mathcal{E}}_{k,k}$ counts vertices in $e_k$. 
The normalized hypergraph Laplacian
is commonly defined as $\mathcal{L} = \mathcal{I} - \mathcal{D}_{\mathcal{V}}^{-\frac12} \mathcal{H} \, \mathcal{D}_{\mathcal{E}}^{-1} \mathcal{H}^\top \mathcal{D}_{\mathcal{V}}^{-\frac12}$~\cite{AAAI19-hgnn}. We say that $\mathcal{G}$ is a $r$-graph (or $r$-uniform) if every hyperedge contains exactly r vertices.

\subsubsection*{\textbf{Cheeger Constant.}} For a hypergraph $\mathcal{G} = (\mathcal{V}, \mathcal{E}, \mathcal{H})$, the Cheeger constant $\phi(\mathcal{G})$~\cite{24-hgtheory} is defined through the following construction: for any nonempty proper vertex subset $\mathcal{S}$, its volume $\text{vol}(\mathcal{S})$ is the sum of degrees $\sum_{v \in \mathcal{S}} d(v)$, while its edge boundary $\partial_{e} \mathcal{S}$ consists of hyperedges connecting $\mathcal{S}$ and $\mathcal{V} \setminus \mathcal{S}$, counted with multiplicity: $| e \cap \mathcal{S} | \cdot | e \cap (\mathcal{V} \setminus \mathcal{S}) |$ for each $e\in \partial_{e}{\mathcal{S}}$. The boundary size $|\partial_{e} \mathcal{S}|$ sums these multiplicities over all hyperedges. The Cheeger constant $\phi(\mathcal{G})$ is then defined as:
\begin{align}
    \phi(\mathcal{G}) = \min \left\{ \frac{|\partial_{{e}} \mathcal{S}|}{\text{vol}(\mathcal{S})} \ \Bigg| \ \emptyset \neq \mathcal{S} \subset \mathcal{V}, \ \text{vol}(\mathcal{S}) \leq \text{vol}(\mathcal{V} \setminus \mathcal{S}) \right\},
\end{align}
This value measures the minimal cross-boundary connectivity strength between partitions of the hypergraph, serving as a key quantitative indicator of bottlenecks in information propagation. Furthermore, we define the boundary vertex set relative to $\mathcal{S}$ as
\begin{align}
\partial_{v} \mathcal{S} := \{\, v \in \mathcal{S} \mid 
\exists e \in \mathcal{E}:\, v \in e \wedge 
e \cap (\mathcal{V}\setminus \mathcal{S}) \neq \emptyset \,\}.
\end{align}
This set contains all vertices in $\mathcal{S}$ that belong to at least one hyperedge crossing the boundary between $\mathcal{S}$ and its complement. Correspondingly, we define the set of internal hyperedges induced by $\mathcal{S}$ as $\mathcal{S}_e := \{ e \in \mathcal{E} \mid e \subseteq \mathcal{S} \}$.

\subsubsection*{\textbf{Dirichlet Energy.}} 
Here we define the Dirichlet energy on hypergraphs following~\cite{NIPS23-sheafhypergnn}, which we use to quantify the smoothness of node embeddings with respect to the hypergraph structure. Let $X_{\mathcal{V}}^{(k)} \in \mathbb{R}^{|\mathcal{V}| \times d}$ be the node embedding matrix at the $k$-th HGNN layer. The normalized Dirichlet energy is
\begin{align}
    E(X_{\mathcal{V}}^{(k)}) &= \frac12 \sum_{e \in \mathcal{E}} \frac{1}{d_e} \sum_{u,v \in e}
\| d_v^{-1/2} x^{(k)}_v - d_u^{-1/2} x^{(k)}_u \|_2^2.
\end{align}
Intuitively, $E(X_{\mathcal{V}}^{(k)})$ measures the variation of embeddings across the nodes incident to each hyperedge: smaller energy indicates smoother (more homogeneous) features, whereas larger energy reflects stronger inter-node disparities. Consequently, this quantity serves as a natural proxy for tracking smoothing effects across layers in HGNNs.
\section{THEORETICAL INSIGHTS}
\label{others}
In heterophilous hypergraphs, informative signals must cross mixed-label regions and narrow structural bottlenecks, while increasing depth risks unwanted homogenization. To address both, we first link Cheeger Constant  to the spectral gap, quantifying the size of bottleneck. Next, by casting hypergraph propagation as a heat equation, we introduce the Robin condition to adjust bottleneck size and add source terms to prevent the vanishing of nodes disparity. Altogether, this act as an adaptive local exchanger and yields a unified, spectral view: a larger $\lambda$ improves cross-region heat exchange, yet also accelerates global smoothing—revealing a principled trade-off, all theoretically guaranteed by our analysis.
\subsection{Connection between Oversquashing and Hypergraph Bottleneck}

We consider the manifold heat flow (the functional space over Riemannian manifold), and discover that  oversquashing is inherently related to hypergraph bottleneck. 
\subsubsection{Bottleneck Size to Spectral Gap}
In a hypergraph, structural bottlenecks arise when a small set of vertices or hyperedges forms a cut that separates the vertex set into two large parts. Analogous to a narrow tube regulating heat transfer between two thermodynamic systems, any information exchange between the two sides during message passing must traverse this limited interface, thereby restricting the overall flow. The severity of the bottleneck can be quantified by the hypergraph Cheeger Constant which measures the sparsity of the smallest cut relative to the volumes of the two sides. A smaller $\phi(\mathcal{G})$ indicates a narrower “communication channel” between large regions of the hypergraph. Consequently, we have the following theorem, which establishes the relationship between $\phi(\mathcal{G})$ and the spectral gap $\lambda$.

\begin{mymath}
\begin{theorem}[\textbf{Cheeger Inequality}~\cite{24-hgtheory}]\label{cheeger}
In a connected $r$-uniform hypergraph $\mathcal{G}$, the severity of structural bottlenecks is captured by the hypergraph Cheeger constant $\phi(\mathcal{G})$. The relationship between $\phi(\mathcal{G})$ and the spectrum of the normalized hypergraph Laplacian $\mathcal{L}$ is given by the Cheeger Inequality for $r$-uniform hypergraphs:
\begin{align}
    \frac{\phi^2(\mathcal{G})}{2(r - 1)^2} \le  \lambda_{1}\le \frac{2}{r - 1} \phi(\mathcal{G}),
\end{align}
where $\lambda_{1}$ denotes the first non-zero eigenvalue of $\mathcal{L}$. 
\end{theorem}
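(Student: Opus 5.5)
The plan is to reduce $\mathcal{L}$ to the normalized Laplacian of the weighted clique expansion of $\mathcal{G}$, then transfer the classical graph Cheeger inequality. For an $r$-uniform hypergraph we have $\mathcal{D}_{\mathcal{E}} = r\mathcal{I}$. We also have $\mathcal{H}\mathcal{H}^\top = \mathcal{D}_{\mathcal{V}} + A$, where $A_{uv}$ counts the hyperedges containing both $u\neq v$. Every $v$ has $r-1$ co-members in each of its $d(v)$ hyperedges, so the clique graph has degrees $\sum_u A_{vu} = (r-1)d(v)$. Writing $L_c = \mathcal{I} - ((r-1)\mathcal{D}_{\mathcal{V}})^{-1/2}A((r-1)\mathcal{D}_{\mathcal{V}})^{-1/2}$ for its normalized Laplacian, a direct substitution gives
\begin{align}
\mathcal{L} = \mathcal{I} - \tfrac{1}{r}\bigl(\mathcal{I} + \mathcal{D}_{\mathcal{V}}^{-1/2}A\mathcal{D}_{\mathcal{V}}^{-1/2}\bigr) = \tfrac{r-1}{r}\,L_c .
\end{align}
Hence $\lambda_1 = \frac{r-1}{r}\mu_1$, where $\mu_1$ is the first nonzero eigenvalue of $L_c$; connectivity of $\mathcal{G}$ makes it well defined. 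On the cut side, the clique-graph cut of $\mathcal{S}$ is exactly $\sum_e |e\cap\mathcal{S}|\,|e\setminus\mathcal{S}| = |\partial_e\mathcal{S}|$, and its clique volume is $(r-1)\mathrm{vol}(\mathcal{S})$. So the graph Cheeger constant is $h = \phi(\mathcal{G})/(r-1)$.

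For the \textbf{upper bound} I use the test-vector half of graph Cheeger, $\mu_1\le 2h$. This gives $\lambda_1\le \frac{r-1}{r}\cdot\frac{2\phi}{r-1} = \frac{2\phi}{r} \le \frac{2\phi}{r-1}$, which is the stated right-hand inequality. Alternatively, I can plug the indicator-type vector $g = \mathbf{1}_{\mathcal{S}}/\mathrm{vol}(\mathcal{S}) - \mathbf{1}_{\bar{\mathcal{S}}}/\mathrm{vol}(\bar{\mathcal{S}})$ directly into the Rayleigh quotient
\begin{align}
R(g) = \frac{\frac{1}{r}\sum_e\sum_{\{u,v\}\subseteq e}(g_u-g_v)^2}{\sum_v d(v)g_v^2}.
\end{align}

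For the \textbf{lower bound} I run the sweep/co-area argument directly on $R(g)$, taking $g=\mathcal{D}_{\mathcal{V}}^{-1/2}f$ for the eigenvector $f$. After shifting by a median so that the support of $g_+$ has at most half the volume, I get $R(g_+)\le\lambda_1$. The level-set argument gives
\begin{align}
\sum_e\sum_{\{u,v\}\subseteq e}|g_u^2-g_v^2| \ge \phi\sum_v d(v)g_v^2 .
\end{align}
Cauchy--Schwarz, with $|g_u^2-g_v^2| = |g_u-g_v|\,|g_u+g_v|$, then bounds the left-hand side. Using $\sum_{\text{pairs}}(g_u+g_v)^2 \le 2\sum_{\text{pairs}}(g_u^2+g_v^2) = 2(r-1)\sum_v d(v)g_v^2$, this naive estimate yields $\lambda_1\ge \phi^2/(2r(r-1))$.

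\textbf{The main obstacle} is closing the gap between this and the stated $\phi^2/(2(r-1)^2)$, i.e. recovering the factor $r/(r-1)$. My first attempt is a sharper within-hyperedge Cauchy--Schwarz. Inside each hyperedge, $\sum_{\{u,v\}\subseteq e}(g_u+g_v)^2 = (r-2)\sum_{u\in e}g_u^2 + (\sum_{u\in e}g_u)^2$. I would bound the pair-difference sum using the same identity applied to $\sum_{\{u,v\}\subseteq e}(g_u-g_v)^2 = r\sum_{u\in e}g_u^2 - (\sum_{u\in e}g_u)^2$, trading the two squared-sum terms against each other rather than discarding one. If that fails, my second attempt is the improved graph bound $\mu_1\ge 1-\sqrt{1-h^2}$, used together with a case split on the size of $h$.

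If neither recovers the constant, I would check the exact Laplacian normalization used in \cite{24-hgtheory}. The clean identity $\mathcal{L}=\frac{r-1}{r}L_c$ shows that the stated constant is precisely the graph Cheeger lower bound for $\mu_1$ expressed via $\phi$, namely $\mu_1 \ge h^2/2 = \phi^2/(2(r-1)^2)$. So I expect the cited form to hold once the eigenvalue is scaled as in that source.
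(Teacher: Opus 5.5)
Your reduction is correct: for $r$-uniform $\mathcal{G}$, $\mathcal{L}=\frac{r-1}{r}L_c$ and $h=\phi/(r-1)$. The upper bound then follows from $\mu_1\le 2h$; you even get $\lambda_1\le 2\phi/r$. Your sweep argument also correctly gives $\lambda_1\ge \phi^2/(2r(r-1))$. The gap is the missing factor $r/(r-1)$ in the lower bound, and neither of your repairs closes it. The sharpened Cauchy--Schwarz uses $\sum_{\{u,v\}\subseteq e}(g_u-g_v)^2+\sum_{\{u,v\}\subseteq e}(g_u+g_v)^2=2(r-1)\sum_{u\in e}g_u^2$. It yields $\phi^2\le rR\,\bigl(2(r-1)-rR\bigr)$ with $R=R(g_+)$, i.e.\ $\lambda_1\ge\frac{r-1}{r}\bigl(1-\sqrt{1-h^2}\bigr)$, which is exactly your second attempt. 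This bound reaches $h^2/2=\phi^2/(2(r-1)^2)$ only when $h^2\ge 4(r-1)/r^2$. In the small-$h$ bottleneck regime it is $\approx\frac{r-1}{r}\cdot\frac{h^2}{2}$. For $r=2$ it would require $1-\sqrt{1-h^2}\ge h^2$, which fails for every $0<h<1$. So no case split on $h$ rescues it.

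In fact no argument can, because with the paper's normalization the stated lower bound is false. For $r=2$ it asserts $\mu_1\ge h^2$ for the normalized graph Laplacian. Take the path $x_{-5},\dots,x_{-1},x_1,\dots,x_5$ with edge multiplicities $1,4,16,64,256,64,16,4,1$; these are repeated $2$-element hyperedges, which the incidence-matrix formalism permits. By the mediant inequality only intervals matter, and one checks $\phi=256/426=128/213$, attained by $\{x_1,\dots,x_5\}$, so $\phi^2/2\approx 0.181$. Let $g$ satisfy $g(x_{-k})=-g(x_k)$ and $(g(x_1),\dots,g(x_5))=(0.016,0.117,0.336,0.685,1)$. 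Then $\sum_v d(v)g_v=0$, and its quotient $R(g)$ in your notation is $\approx 0.158$, so $\lambda_1<\phi^2/2$.

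The paper gives no proof of this theorem; it only cites \cite{24-hgtheory}. As your last paragraph suspects, the constants $\frac{\phi^2}{2(r-1)^2}$ and $\frac{2\phi}{r-1}$ are exactly the graph Cheeger inequality $h^2/2\le\mu_1\le 2h$ for $L_c=\frac{r}{r-1}\mathcal{L}$. Your argument therefore proves the theorem for $L_c$, equivalently $\frac{\phi^2}{2r(r-1)}\le\lambda_1\le\frac{2\phi}{r}$ for the paper's $\mathcal{L}$. That renormalized form is what should be claimed; the stated form for $\mathcal{L}$ itself cannot be proved in the generality of the paper's definition.
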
    
\end{mymath}
Thus, a small $\lambda_1$ indicates a severe bottleneck. In multi-hop aggregation schemes, these bottlenecks directly contribute to the oversquashing phenomenon.

\subsubsection{Heat Equation View of HGNNs}
Revisit feature propagation on a hypergraph $\mathcal{G} = (\mathcal{V}, \mathcal{E},\mathcal{H})$ via the normalized hypergraph Laplacian $\mathcal{L} \in \mathbb{R}^{n \times n}$. A typical layer updates is given  below.
\begin{align}
    X^{(k)} = \sigma\!\Big( \mathcal{D}_\mathcal{V}^{-1/2} \mathcal{H} \mathcal{D}_\mathcal{E}^{-1} \mathcal{H}^\top \mathcal{D}_\mathcal{V}^{-1/2}\, X^{(k-1)} \, \Theta^{(k-1)} \Big).
\end{align}
Given that the spectral gap $\lambda$ of the hypergraph Laplacian relates to information bottlenecks and oversquashing, it is natural to draw an analogy between feature propagation and heat flow on Riemannian manifold governed by the Laplace-Beltrami operator~\cite{tu2011introduction}. Within the $k$-th temporal layer interval $[t_k, t_{k+1})$, the node feature $u_k(t)$ evolves according to the heat equation
\begin{align}
     \partial_{t}u_k(t,\mathbf{x}) = -\mathcal{L}\,u_k(t,\mathbf{x}), \qquad u_k(t_k,\mathbf{x}) = Z_k,
\end{align}
where $Z_k$ is the input feature at the beginning of the $k$-th layer.
The closed-form solution is
\begin{align}
    u_k(t) = e^{-(t - t_k)\mathcal{L}} \,Z_k, \quad (t \ge t_k),
\end{align}
For sufficiently small time step $\Delta t = t_{k+1} - t_k$, the heat kernel $K(t) := e^{-t\mathcal{L}}$ admits the first-order approximation $e^{-\Delta t \mathcal{L}} \approx \mathcal{I} - \Delta t \mathcal{L},$ yielding the discrete propagation rule, where $\mathcal{G}_{\theta_k} : \mathbb{R}^n \to \mathbb{R}^n$ is a learnable mapping (e.g., a neural network).
\begin{align}
    u_k = (\mathcal{I} - \Delta t \mathcal{L})\,Z_k, 
\quad Z_k = \mathcal{G}_{\theta_k}(u_{k-1}).
\end{align}
This form unifies hypergraph neural networks, extending classical manifold heat flow theory to discrete hypergraph settings. 

\subsection{Adaptive Local Exchanger}
\subsubsection{Bottleneck Control via Robin Condition.}
Building upon the previously defined boundary node set $\partial_v \mathcal{S}$, we view inter subhypergraph information exchange as heat transfer between thermodynamic systems. The boundary nodes act as the interface (a “tube”) between systems; their propagation paradigm directly determines the effective tube width, i.e., the bottleneck size. Thus, by imposing boundary conditions, we can control the bottleneck and regulate information flux at the boundary as needed. Specifically, consider the heat equation on the hypergraph induced by node subset $\mathcal{S}$:        
\begin{align} 
	\left \{
	\begin{array}{ll}
	\partial_{t}u_\mathcal{S}(t,\mathbf{x}) = -\mathcal{L}_\mathcal{S} u_\mathcal{S}(t,\mathbf{x}),     &\mathbf{x} \in \mathcal{S} , \\
    \alpha u(\mathbf{x}) + \beta \frac{\partial u}{\partial n}(\mathbf{x}) = 0,  &\mathbf{x} \in \partial_v \mathcal{S}, \\
\end{array}
\right.
\end{align}
where $\mathcal{L_S}$ is the Laplacian restricted to $\mathcal{S}$. The boundary conditions capture realistic propagation dynamics—particularly at the interface between a subset $\mathcal{S}$ and its complement $\mathcal{V \setminus S}$. A Robin condition~\cite{robin-condition} combines a direct constraint $\alpha$ on the feature value at the boundary with a constraint $\beta$ on the directional derivative $\frac{\partial u}{\partial n}$ along the outward normal. Here, $\frac{\partial u}{\partial n}$ represents the rate at which information flows across the boundary. By tuning the coefficients $\alpha$ and $\beta$, this condition interpolates between the Dirichlet ($\beta \to 0$) and Neumann ($\alpha \to 0$) extremes, providing a flexible mechanism to control feature propagation through boundary nodes and mitigate oversquashing in bottleneck regions. 
\subsubsection{Energy Injection via Source Terms} In the presence of an external source term, the diffusion dynamics become $\partial_t u_k(t,\mathbf{x}) = -\mathcal{L}\,u_k(t,\mathbf{x}) + f(t,\mathbf{x})$. In this setting, the Dirichlet energy no longer decays exponentially as in the homogeneous case. Instead, the forcing term continually injects energy into the system, mitigating the dissipation caused by the Laplacian. 
\subsubsection{Unified Heat Exchanger.} To simultaneously address oversmoothing and oversquashing, we aim to unify the heat equation with a source term and the Robin condition into a single formulation.
Recall the problem $\partial_t u(t, \mathbf{x}) = -\mathcal{L} u(t, \mathbf{x})$ with with boundary condition $h(u(t,\mathbf{x})) = \gamma(t,\mathbf{x})$. There exists a function $w(t,\mathbf{x})$, such that $h \circ w=\gamma$ on $\partial_v\mathcal{S}$. By introducing the change of variables $v = u - w$,  the nonhomogeneous boundary condition $h(u(t,\mathbf{x})) = \gamma(t, \mathbf{x}), \mathbf{x} \in \partial_v\mathcal{S}$ can be equivalently transformed into a homogeneous form \ref{equivalence}:
\begin{equation}
\begin{cases}
\partial_t v(t, \mathbf{x}) = -\mathcal{L} v(t, \mathbf{x}) + \Gamma(w), & \mathbf{x} \in \mathcal{S},\ t>0, \\[2pt]
h(\mathbf{x}) = 0, & \mathbf{x} \in \partial_v \mathcal{S},\ t>0,
\end{cases}
\label{eq:heat-system}
\end{equation}
where $\Gamma(w) = -\mathcal{L}w(t, \mathbf{x}) - \partial_t w(t, \mathbf{x})$ and $v(0, \mathbf{x}) = Z(\mathbf{x}) - w(0,\mathbf{x})$.
\noindent\textbf{Intuitive Explanation.} 
In our adaptive local heat exchanger, the Robin condition regulates cross-region heat flow like a tunable valve, while the source term  serves as a booster that injects energy into the system. Together, they tighten or loosen local bottlenecks and slow the decay of Dirichlet energy, so that fine-grained local adjustments translate into effective global propagation.

\subsection{Guarantees on Heterophily-agnostic Message Passing}
\subsubsection{Dirichlet Energy to Spectral Gap}
From a physical view, the second law of thermodynamics states that in an isolated thermodynamic system, heat distribution irreversibly evolves toward equilibrium. This process mirrors  the mechanism of oversmoothing in deep HGNNs, where layer-wise diffusion gradually homogenizes node features and rapidly decays Dirichlet energy, flattening feature differences across hyperedges.

\begin{mymath}
\begin{theorem}[\textbf{Spectral Gap and Dirichlet Energy, \ref{prf:Spectral Gap and Dirichlet Energy}}]
\label{smooth}Let $0 = \lambda_0 < \lambda_1 \le \lambda_2 \le \dots$ be the eigenvalues of the hypergraph Laplacian $\mathcal{L}$, then the Dirichlet Energy satisfies
\begin{align}
    E(u_k(t)) 
\le e^{-2\lambda_1 (t - t_k)} E(Z_k).
\end{align}
In particular, the Dirichlet energy decays at least exponentially with rate $2\lambda_1$.
\end{theorem}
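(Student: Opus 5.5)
The plan is to diagonalize the heat flow in the eigenbasis of $\mathcal{L}$ and to identify the Dirichlet energy as the Laplacian quadratic form.

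\textbf{Step 1: energy as a quadratic form.} I will show that, for any feature matrix $X$,
\begin{align}
E(X) = \operatorname{tr}\big(X^\top \mathcal{L} X\big).
\end{align}
This comes from expanding the squared norms in the definition of $E$. Each hyperedge $e$ contributes $\frac{1}{d_e}\sum_{u,v\in e}(\cdots)$, and the cross terms reassemble into $\mathcal{D}_{\mathcal{V}}^{-1/2}\mathcal{H}\mathcal{D}_{\mathcal{E}}^{-1}\mathcal{H}^\top\mathcal{D}_{\mathcal{V}}^{-1/2}$. The diagonal terms give $\sum_v \|x_v\|^2$, since $\sum_{e\ni v} 1 = d_v$ cancels the $d_v^{-1}$.

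This is the step I expect to be the main obstacle, because normalization constants matter. With the $\tfrac12$ and the double sum over ordered pairs, the diagonal part picks up a factor $|e|/d_e = 1$. It is possible that $E$ equals the quadratic form only up to a positive constant $c$. Such a constant is harmless, since it appears on both sides of the inequality, so I will only need $E(X)=c\,\operatorname{tr}(X^\top\mathcal{L}X)$ with $c>0$.

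\textbf{Step 2: spectral expansion.} The operator $\mathcal{L}$ is symmetric positive semidefinite. I take an orthonormal eigenbasis $\{\phi_i\}$ with eigenvalues $0=\lambda_0<\lambda_1\le\cdots$, which assumes the hypergraph is connected so that $\lambda_0$ is simple. I expand each feature column of $Z_k$ as $Z_k=\sum_i \phi_i c_i^\top$. Since $u_k(t)=e^{-(t-t_k)\mathcal{L}}Z_k$, this gives
\begin{align}
u_k(t)=\sum_i e^{-\lambda_i (t-t_k)}\phi_i c_i^\top ,
\end{align}
and therefore
\begin{align}
E(u_k(t)) = c\sum_{i\ge 1}\lambda_i e^{-2\lambda_i (t-t_k)}\|c_i\|^2 .
\end{align}
The $i=0$ mode contributes nothing because $\lambda_0=0$.

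\textbf{Step 3: bound each mode.} For every $i\ge1$ and $t\ge t_k$ we have $\lambda_i\ge\lambda_1$, so $e^{-2\lambda_i (t-t_k)}\le e^{-2\lambda_1 (t-t_k)}$. Substituting termwise yields
\begin{align}
E(u_k(t))\le e^{-2\lambda_1 (t-t_k)}\, c\sum_{i\ge1}\lambda_i\|c_i\|^2 = e^{-2\lambda_1 (t-t_k)}E(Z_k).
\end{align}

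\textbf{Alternative route.} I could instead differentiate. From $\frac{d}{dt}E(u)=-2c\operatorname{tr}(u^\top\mathcal{L}^2u)$ and the spectral Poincaré-type inequality $\operatorname{tr}(u^\top\mathcal{L}^2u)\ge\lambda_1\operatorname{tr}(u^\top\mathcal{L}u)$, which is valid because the kernel components drop out of both sides, Grönwall's lemma gives the same rate $2\lambda_1$. I will present the spectral version and mention this one as a cross-check.
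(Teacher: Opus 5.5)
Your proposal is correct and is essentially the paper's proof: expand $Z_k$ in an orthonormal eigenbasis of $\mathcal{L}$, note that the $\lambda_0$ mode contributes nothing, and bound $e^{-2\lambda_i(t-t_k)}\le e^{-2\lambda_1(t-t_k)}$ term by term. Your Step 1 is something the paper skips, since its appendix simply redefines $E(x):=x^\top\mathcal{L}x$; your expansion in fact holds with exactly $c=1$ because $|e|=d_e$, so it legitimately connects the result to the hyperedge-sum definition of $E$ given in the main text.
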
    
\end{mymath}
Consistent with Theorem~\ref{cheeger}, oversmoothing can also be characterized by the spectral gap: a larger $\lambda_1$ accelerates the exponential decay of Dirichlet energy, thereby intensifying smoothing. At the same time, a larger $\lambda$ helps mitigate oversquashing. Taken together, these effects provide a theoretical validation of the intrinsic trade-off between OSQ and OSM.

\subsubsection{Guarantees of Our Constructions.} These boundary conditions modify the spectral properties of the restricted Laplacian $\mathcal{L}_S$, where locally adjusting $\lambda$ controls the amount of messages that flow out of the bottlenecks. Specifically, Dirichlet fixes boundary values and maximizes exchange, Neumann blocks exchange with zero normal flux, while Robin provides a tunable finite-conductance interface that interpolates between them. 
In the presence of an external source term, the diffusion dynamics become $\partial_t u_k(t,\mathbf{x}) = -\mathcal{L}\,u_k(t,\mathbf{x}) + f(t,\mathbf{x})$. In this setting, the Dirichlet energy no longer decays exponentially as in the homogeneous case. Instead, the forcing term continually injects energy into the system, mitigating the dissipation caused by the Laplacian. 
\begin{mymath}
\begin{theorem}[\textbf{Flexibility of the Robin Condition, \ref{prf:flexibility of rc}}]
\label{bdcondition}
Consider a thin tube–like region $S$ of length $L$, whose cross–section at position $s \in [0,L]$ is a small ball of radius $\varepsilon(s)$. The boundary of this region is given by $\partial \mathcal{S} = \{0,L\} \times \partial \mathcal{B}_{\varepsilon(s)}$. Let $\lambda_D$, $\lambda_N$, and $\lambda_{R}$ denote the spectral gaps of the Laplace operator under Dirichlet, Neumann, and Robin boundary conditions, respectively.
When the radius is constant, i.e., $\varepsilon(s) = \varepsilon_0$, the spectral gaps satisfy the ordering
$\lambda_D \;\geq\; \lambda_{R} \;\geq\; \lambda_N$.
\end{theorem}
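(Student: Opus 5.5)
The plan is to reduce the tube to a one-dimensional problem in the axial variable $s$ and then compare the three boundary conditions through their Rayleigh quotients. With $\varepsilon(s)=\varepsilon_0$ constant, $\mathcal{S}$ is the cylinder $[0,L]\times\mathcal{B}_{\varepsilon_0}$, and the boundary conditions sit only on the end caps $\{0,L\}\times\mathcal{B}_{\varepsilon_0}$. The lateral wall then carries no condition, which we treat as Neumann, so the problem separates as $u(s,y)=\phi(s)\psi(y)$. The lowest cross-sectional mode is the constant $\psi\equiv$ const, with eigenvalue $0$, and every other cross-sectional mode has eigenvalue of order $\varepsilon_0^{-2}$. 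For small $\varepsilon_0$ the relevant low-lying spectrum is therefore exactly that of the 1D operator $-\phi''$ on $[0,L]$, carrying the corresponding end conditions.

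First we write the three 1D problems explicitly. For Dirichlet, $\phi(0)=\phi(L)=0$, the eigenvalues are $(k\pi/L)^2$ and the first one is $\lambda_D=\pi^2/L^2$. For Neumann, the eigenvalues are $(k\pi/L)^2$ with $k\ge 0$. For Robin, $\alpha\phi\mp\beta\phi'=0$ with the outward normal at each end, and we take $\alpha,\beta\ge 0$ with $h:=\alpha/\beta$. In Robin we read the ``spectral gap'' as the first eigenvalue, since the constant is no longer an eigenfunction when $h>0$. In the Neumann case the first eigenvalue is $0$, and the quantity to compare is the bottom of the spectrum, so $\lambda_N=0$ in the consistent convention; we fix this convention at the start of the proof. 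Since $\lambda_N=\pi^2/L^2=\lambda_D$ under the ``first nonzero'' convention, the ordering would still hold with the weak inequality, but we choose the convention in which the Robin eigenvalue interpolates.

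The core step is the variational comparison. We write
\[
\lambda_R=\min_{\phi\neq 0}\frac{\int_0^L|\phi'|^2\,ds+h\big(\phi(0)^2+\phi(L)^2\big)}{\int_0^L\phi^2\,ds},
\]
with the minimum taken over $H^1(0,L)$. The Neumann quotient is the same expression with $h=0$, and the Dirichlet quotient is the same expression restricted to $H^1_0$, where the boundary term vanishes. Two observations give the ordering. Adding the nonnegative boundary term can only increase the minimum, so $\lambda_R\ge\lambda_N$. Restricting to the smaller space $H^1_0\subset H^1$, on which the Robin quotient coincides with the Dirichlet one, can only increase the minimum, so $\lambda_D\ge\lambda_R$. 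We would add the explicit characterization as a sanity check: $\lambda_R=\mu^2$, where $\mu$ solves $\tan(\mu L)=2h\mu/(\mu^2-h^2)$. This shows that $\lambda_R$ increases monotonically and continuously from $0$ as $h\to 0$ to $\pi^2/L^2$ as $h\to\infty$, which realizes the claimed tunable interpolation.

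The main obstacle is justifying the dimensional reduction, together with the convention for Neumann. We must argue that the first eigenfunction of the full cylinder is constant in the cross-section. We would do this by writing the Rayleigh quotient in $(s,y)$, dropping the nonnegative transverse gradient term, and noting that the end-cap boundary integrals equal $|\mathcal{B}_{\varepsilon_0}|$ times the 1D ones for cross-sectionally constant trial functions. Conversely, any nonconstant cross-sectional component costs at least $c\,\varepsilon_0^{-2}$, which exceeds $\pi^2/L^2$ once $\varepsilon_0$ is small.
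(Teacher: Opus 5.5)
Your argument is correct under the conventions you fix: $h=\alpha/\beta\ge 0$, $\lambda_D$ and $\lambda_R$ the first eigenvalues, and $\lambda_N$ the bottom of the Neumann spectrum. Your reading of the statement's codimension-two boundary $\{0,L\}\times\partial\mathcal{B}_{\varepsilon(s)}$ as the end caps is also a reasonable repair. Your route is genuinely different from the paper's. The paper uses no variational principle. It invokes the Cheeger--Buser inequality to tie the gap to the bottleneck ($\lambda\sim\inf\varepsilon^{n-1}$). It then compares heat-kernel decay rates, $e^{-\varepsilon_0^{-2}t}$ under Dirichlet against $e^{-L^{-2}t}$ under Neumann, and reads the ordering off these scalings. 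That picture implicitly puts the condition on the lateral wall too, so that $\lambda_D$ sees $\varepsilon_0$, and it uses the first nonzero Neumann eigenvalue. It conveys the narrow-tube intuition, but it is a heuristic and never actually locates $\lambda_R$.

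Your min--max comparison closes that gap rigorously and elementarily: adding a nonnegative boundary term raises the minimum, and restricting from $H^1$ to $H^1_0$ raises it again. It is also far more general than you use. It holds verbatim on the full cylinder, on any bounded Lipschitz domain, and for any portion of the boundary carrying the condition. So the dimensional reduction you call the main obstacle is needed only for the explicit equation $\tan(\mu L)=2h\mu/(\mu^2-h^2)$ and the monotone interpolation in $h$, not for the ordering. What you give up is the paper's $\varepsilon_0$-versus-$L$ scaling story: in your end-cap reading, $\lambda_D=\pi^2/L^2$ does not depend on $\varepsilon_0$.

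One remark in your proposal is false and should be removed. The ordering does \emph{not} survive under the ``first nonzero'' convention for Neumann. In your 1D reading that convention gives $\lambda_N=\lambda_D=\pi^2/L^2$. Your own transcendental equation puts the first Robin eigenvalue strictly inside $(0,\pi^2/L^2)$ for $0<h<\infty$, so $\lambda_R<\lambda_N$. The same failure occurs in the paper's full-boundary picture: a constant trial function gives $\lambda_R\le h|\partial\mathcal{S}|/|\mathcal{S}|\to 0$ as $h\to 0$, while the first nonzero Neumann eigenvalue stays near $\pi^2/L^2$.

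The correct index-consistent statement is $\mu_k^N\le\mu_k^R\le\mu_k^D$ for each $k$, by min--max. So choosing $\lambda_N=0$, i.e.\ comparing first eigenvalues throughout, is not a matter of taste; it is exactly what makes $\lambda_R\ge\lambda_N$ true. Likewise your sign assumption $\alpha,\beta\ge 0$ is necessary rather than cosmetic, since $h<0$ already gives $\lambda_R\le 2h/L<0$.
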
    
\end{mymath}
Furthermore, the source term selectively preserves initial information and amplifies disparity; together with the Robin boundary, it yields a heterophily-agnostic propagation mechanism.
\begin{mymath}
\begin{theorem}[\textbf{Mode Preservation under Source Energy Injection, \ref{prf:source}}]
\label{source}Consider the diffusion equation with source term $\partial_t u_k(t,\mathbf{x}) = -\mathcal{L}\,u_k(t,\mathbf{x}) + f(t,\mathbf{x})$. Expanding in the Laplacian eigenbasis gives $u(t) = \sum_i \hat u_i(t)\psi_i$, $f(t) = \sum_i \hat f_i(t)\psi_i$ with coefficients $\hat u_i(t)=\langle u(t), \psi_i \rangle$, $\hat f_i(t) = \langle f(t), \psi_i \rangle$, and each mode satisfies $\dot{\hat u}_i(t) = -\lambda_i \hat u_i(t) + \hat f_i(t)$. If the source persistently excites mode $i$ over a time step $\Delta t = t_{k+1} - t_k$, i.e., $\hat f_i(s) \ge \eta_i > 0$ for all $s \in [t_k, t_{k+1}]$, then its amplitude is bounded below by
\begin{equation}
 |\hat u_i(t_{k+1})| \ge \frac{\eta_i}{\lambda_i} (1 - e^{-\lambda_i \Delta t}) - e^{-\lambda_i \Delta t} |\hat u_i(t_k)|.   
\end{equation}
showing that persistent energy injection prevents the mode from vanishing.
\end{theorem}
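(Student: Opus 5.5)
The plan is to solve the scalar mode equation $\dot{\hat u}_i(t) = -\lambda_i \hat u_i(t) + \hat f_i(t)$ exactly on the interval $[t_k, t_{k+1}]$, bound the source-driven part from below, and bound the homogeneous part from above. First I would justify the modal decomposition. Because $\mathcal{L}$ is symmetric positive semidefinite, it has an orthonormal eigenbasis $\{\psi_i\}$ with $\mathcal{L}\psi_i = \lambda_i \psi_i$. Taking the inner product of the forced heat equation with $\psi_i$ and using self-adjointness, $\langle \mathcal{L}u, \psi_i\rangle = \lambda_i \hat u_i$, gives the stated decoupled ODE. I assume $\lambda_i > 0$, i.e.\ $i \ge 1$. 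For $\lambda_0 = 0$ the factor $\frac{1}{\lambda_i}(1-e^{-\lambda_i\Delta t})$ is read as its limit $\Delta t$, and the same argument goes through.

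Next I would apply the integrating factor $e^{\lambda_i t}$ and integrate from $t_k$ to $t_{k+1}$. This gives the Duhamel formula
\begin{align}
\hat u_i(t_{k+1}) = e^{-\lambda_i \Delta t}\hat u_i(t_k) + \int_{t_k}^{t_{k+1}} e^{-\lambda_i (t_{k+1}-s)} \hat f_i(s)\, ds .
\end{align}
In the integral the kernel $e^{-\lambda_i(t_{k+1}-s)}$ is positive and $\hat f_i(s) \ge \eta_i$. The integral is therefore at least $\eta_i \int_{t_k}^{t_{k+1}} e^{-\lambda_i(t_{k+1}-s)}\,ds = \frac{\eta_i}{\lambda_i}(1-e^{-\lambda_i \Delta t})$.

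Finally I would apply the reverse triangle inequality to the Duhamel formula:
\begin{align}
|\hat u_i(t_{k+1})| \ge \Big|\int_{t_k}^{t_{k+1}} e^{-\lambda_i (t_{k+1}-s)} \hat f_i(s)\, ds\Big| - e^{-\lambda_i\Delta t}|\hat u_i(t_k)| .
\end{align}
Since the integral is positive, its absolute value equals the integral itself, and the lower bound from the previous step yields the claimed inequality. Equivalently, one can write $\hat u_i(t_{k+1}) \ge \frac{\eta_i}{\lambda_i}(1-e^{-\lambda_i\Delta t}) - e^{-\lambda_i\Delta t}|\hat u_i(t_k)|$ and use $|x| \ge x$.

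No step is a real obstacle. The only delicate points are regularity and sign conventions. I would require $\hat f_i$ to be integrable on $[t_k,t_{k+1}]$ so that the Duhamel formula is valid. I would also note that the bound is informative only when its right-hand side is positive, which is exactly the regime of persistent excitation relative to the initial amplitude; in that regime the mode cannot decay to zero.
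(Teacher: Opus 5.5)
Your proposal is correct and follows essentially the same route as the paper's proof: write the mode solution with Duhamel's formula on $[t_k,t_{k+1}]$, then bound the forced integral from below by $\frac{\eta_i}{\lambda_i}(1-e^{-\lambda_i\Delta t})$. The one difference is the last step. The paper's appendix stops at the signed bound $\hat u_i(t_{k+1}) \ge \frac{\eta_i}{\lambda_i}(1-e^{-\lambda_i\Delta t}) + e^{-\lambda_i\Delta t}\hat u_i(t_k)$, while your reverse-triangle-inequality step (equivalently, $\hat u_i(t_k)\ge -|\hat u_i(t_k)|$ together with $|x|\ge x$) derives the absolute-value form that the theorem actually states.
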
    
\end{mymath}
\section{HEALHGNN: A UNIFIED APPROACH}
We first discretize the hypergraph heat diffusion system with boundary conditions and source terms to a layer-wise update form. Let $X_{\mathcal{V}}(k)$ and $X_{\mathcal{E}}(k)$ denote the vertex and hyperedge representations at the k-th layer, respectively,
\begin{align}
\underbrace{
\begin{bmatrix}
X_\mathcal{V}(k+1) \\
X_\mathcal{E}(k+1)
\end{bmatrix}}_{\text{updated state}}
=
\underbrace{
f\Big(
\begin{bmatrix}
X_\mathcal{V}(k) \\
X_\mathcal{E}(k)
\end{bmatrix}\Big)}_{\text{diffusion process}},
\quad
\underbrace{
\begin{bmatrix}
X_\mathcal{V}(0) \\
X_\mathcal{E}(0)
\end{bmatrix}}_{\text{initial state }}
=
\begin{bmatrix}
Z_\mathcal{V} \\
Z_\mathcal{E}
\end{bmatrix}.
\end{align}
\subsection{Node-Hyperedge Bidirectional System}
Inspired by the continuous setting, we impose the nonhomogeneous Robin condition on hypergraph in a form directly mirroring its PDE counterpart:
\begin{equation}
    \alpha_i\,\mathbf{x}_i 
    + \beta_i\,\partial_n^{\mathrm{disc}}\mathbf{x}_i
    = \gamma_i, 
    \quad i \in \partial_v \mathcal{S},
    \label{eq.robin-graph}
\end{equation}
where $\alpha_i = \alpha(\mathbf{x}_i)$, $\beta_i = \beta(\mathbf{x}_i)$, and $\gamma_i = \gamma(\mathbf{x}_i)$ are coefficient functions depending on the current node states. 
The discrete normal derivative $\partial_n^{\mathrm{disc}}\mathbf{x}_i$ is defined as
$\partial_n^{\mathrm{disc}} \mathbf{x}_i 
:= \sum_{j \sim i,\, j \in \mathcal{V} \setminus \partial_v \mathcal{S}} 
\big(\mathbf{x}_i - \mathbf{x}_j\big)$, representing the net feature flux from boundary node $i$ to its interior neighbors.  Subject to Eq.~(\ref{eq.robin-graph}) on $\partial_v \mathcal{S}$, the heat equation in Eq.~(\ref{eq:heat-system}) governs the evolution on the interior nodes, ensuring a nontrivial steady state. The update of node embeddings on hypergraphs can be written in block-matrix form as
\vspace{-3pt} 
\begin{equation}
\label{int-bd-func}
\begin{cases}
\text{Interior:} & \mathcal{L}^{S\to S} X_{S} + \mathcal{L}^{\partial S \to S} X_{\partial S} = F_{S},\\
\text{Boundary:}&\mathrm{diag}(\alpha) X_{\partial S} + \mathrm{diag}(\beta) \mathcal{L}^{S\to\partial S} X_{S} = \Gamma_{\partial S}.
\end{cases}
\end{equation}
\vspace{-0.2cm} \\
Here, $X_S$ and $X_{\partial S}$ denote the feature matrix of interior and boundary nodes, respectively. The matrices $\mathcal{L}^{S \to S}$ and $\mathcal{L}^{\partial S \to S}$ correspond to the discrete Laplacian restricted to interior and boundary contributions. For nodes, the Laplacian is defined as $\mathcal{L}_\mathcal{V} = \mathcal{I} - \mathcal{D}_\mathcal{V}^{-1/2} \mathcal{H} \mathcal{D}_\mathcal{E}^{-1} \mathcal{H}^\top \mathcal{D}_\mathcal{V}^{-1/2}.$ Analogously, for the two-stage propagation process involving hyperedge features, we define the hyperedge Laplacian $\mathcal{L}_\mathcal{E} = \mathcal{I} - \mathcal{D}_\mathcal{E}^{-1/2} \mathcal{H}^\top \mathcal{D}_\mathcal{V}^{-1} \mathcal{H} \mathcal{D}_\mathcal{E}^{-1/2}.$ With this definition, the same block-matrix update can be applied to hyperedges. This unified formulation Eq.~(\ref{int-bd-func}) describes feature diffusion for both nodes and hyperedges, incorporating interior-boundary interactions and weighted contributions through $\mathrm{diag}(\alpha)$ and $\mathrm{diag}(\beta)$.

Building upon the separate propagation dynamics for nodes and hyperedges, we introduce bidirectional coupling between the two domains via the propagation operators $\mathcal{P}_{\mathcal{V} \to \mathcal{E}} = \mathcal{D}_\mathcal{E}^{-1/2} \mathcal{H}^\top \mathcal{D}_\mathcal{V}^{-1/2}$ and $\mathcal{P}_{\mathcal{E} \to \mathcal{V}} = \mathcal{D}_\mathcal{V}^{-1/2} \mathcal{H} \mathcal{D}_\mathcal{E}^{-1/2}$. Specifically, interior and boundary nodes in one domain receive feature contributions from the other domain, modulated by the coupling coefficients $\lambda$ and $\mu$. This design allows information to propagate not only within each domain through the Laplacian operators but also across domains, thereby capturing richer topological dependencies and enhancing interactions between vertex-level and hyperedge-level representations.

\subsection{A Clean Node-wise Formulation}
Formally, the coupled update rules can be expressed as a compact block-matrix form, which is convenient for unified analysis and computation.
\begin{align}
A
\begin{bmatrix}
X_V\\[2pt] X_E
\end{bmatrix}
=
\begin{bmatrix}
\Gamma_V\\[2pt] \Gamma_E
\end{bmatrix},\quad  
X=\begin{bmatrix}
X_{S}\\[2pt] X_{\partial{S}} \\
\end{bmatrix},\quad
\Gamma =\begin{bmatrix}
F_{S}\\[2pt] \Gamma_{\partial{S}}
\end{bmatrix}.
\end{align}
\begin{equation}
A=
\scalebox{0.75}{
$
\begin{bmatrix}
\mathcal{L}_{\mathcal{V}}^{S\to S} & \mathcal{L}_{\mathcal{V}}^{\partial_v S\to S} &-\lambda P^{S_e\to S}_{\mathcal{E}\to \mathcal{V}} & -\lambda P^{\partial_e S\to S}_{\mathcal{E}\to \mathcal{V}} \\[7pt]
\mathrm{diag}(\beta_v)\mathcal{L}_{\mathcal{V}}^{S\to \partial_v S} & \mathrm{diag}(\alpha_v) & -\lambda P^{S_e\to \partial_v S}_{\mathcal{E}\to \mathcal{V}} &-\lambda P^{\partial_e S\to \partial_v S}_{\mathcal{E}\to \mathcal{V}} \\[7pt]
-\mu P^{S\to S_e}_{\mathcal{V}\to \mathcal{E}} & -\mu P^{\partial_v S\to S_e}_{\mathcal{V}\to \mathcal{E}} & \mathcal{L}_{\mathcal{E}}^{S_e\to S_e} & \mathcal{L}_{\mathcal{E}}^{\partial_e S\to S_{e}} \\[7pt]
-\mu P^{S\to \partial_e S}_{\mathcal{V}\to \mathcal{E}} & -\mu P^{\partial_v S\to \partial_e S}_{\mathcal{V}\to \mathcal{E}} & \mathrm{diag}(\beta_e)\mathcal{L}_{\mathcal{E}}^{S_e \to \partial_e S} & \mathrm{diag}(\alpha_e)
\end{bmatrix}
$
}
\end{equation}
Let $\mathbf{D}$, $\mathbf{U}$, and $\mathbf{V}$ denote the diagonal, upper, and lower blocks of the system matrix $\mathbf{A}$, respectively. Following the Jacobi iteration scheme, we parameterize the problem-dependent coefficients via learnable mappings
\begin{align}
\scalebox{0.85}{
$
\operatorname{diag}(\alpha) = \alpha(\mathbf{X}^{(k)}; \theta_{\alpha}),\quad
\operatorname{diag}(\beta) = \beta(\mathbf{X}^{(k)}; \theta_{\beta}),\quad
\mathbf{\Gamma} = \gamma(\mathbf{X}^{(k)}; \theta_{\gamma}).
$}    
\end{align}
The update rule is then augmented with a nonlinearity $\sigma(\cdot)$ and a learnable transformation $\varphi^{(k+1)}(\cdot)$ implemented as a multi-layer perceptron (MLP), yielding
\begin{align}
\mathbf{X}^{(k+1)} = \sigma\!\left(\mathbf{D}^{-1}(\mathbf{U} + \mathbf{V})\,\varphi^{(k+1)}(\mathbf{X}^{(k)})\right)+\mathbf{D}^{-1}\mathbf{\Gamma}.    
\end{align}
This formulation inherits the convergence structure of the Jacobi method while allowing adaptive, data-driven refinement through the learnable parameters $\theta_{\alpha}, \theta_{\beta}, \theta_{\gamma}$ and $\varphi^{(k+1)}$. Let $S_{ij} := \sum_{e:\, i,j \in e} \frac{1}{d_e}$ denote the total inverse degree of hyperedges shared by nodes $i$ and $j$ and $p_i=\frac{\beta_v}{\alpha_v}$. We define the indicator function $I_i := \mathbb{I}(i \in \mathcal{S})$, which equals $1$ if node $i \in \mathcal{S}$ and $0$ otherwise. The feature update for node $i$ at iteration $k+1$ is given by
\begin{align}\scalebox{0.95}{
$
x_i^{(k+1)}
= \underbrace{\sum_{j:\, j \sim i} \mathcal{P}_{\mathcal{V} \to \mathcal{V}}(i,j) \, x_j^{(k)}}_{\text{node-to-node aggregation}}
+ \underbrace{\sum_{e:\, e \sim i} \mathcal{P}_{\mathcal{E} \to \mathcal{V}}(i,e) \, x_e^{(k)}}_{\text{edge-to-node aggregation}}
+ \gamma_i, 
$}   
\end{align}
where $j \sim i$ denotes that $i$ and $j$ share at least one hyperedge, and $e \sim i$ denotes that node $i$ belongs to hyperedge $e$. Specifically, the node-to-node propagation operator is defined as
\begin{equation}
\mathcal{P}_{\mathcal{V} \to \mathcal{V}}(i,j)
= \frac{1}{\sqrt{d_i d_j}} \, S_{ij} \,
\Big( I_i I_j \,+\, I_i \,+\, p_i (1-I_i) I_j \Big),     
\end{equation}
The edge-to-node propagation operator  is
\begin{align}
\mathcal{P}_{\mathcal{E} \to \mathcal{V}}(i,e)
= \mu \; \frac{1}{\sqrt{d_i d_e}}
\Big( I_i \,+\, \frac{1}{\alpha_v} (1-I_i) \Big),    
\end{align}
where $\mu$ is a scaling coefficient, and $\alpha_v$ controls the relative influence from hyperedges to boundary versus interior nodes.

For hyperedges, the update follows a similar form with appropriately defined edge-to-edge and node-to-edge propagation operators. We omit the detailed expressions here for brevity; the full derivation and definitions are provided in \ref{solution}, and the complete architecture of \texttt{HealHGNN} is shown in Figure~\ref{fig:overall}.

\begin{table*}[ht]
\centering
\caption{Node classification on homophilic and heterophilic hypergraphs. Top three models are coloured by \textcolor{red}{First}, \textcolor{blue}{Second}, \textcolor{orange}{Third}.}
\label{tab:nctask}
\vspace{-0.2cm}
\resizebox{0.9\textwidth}{!}{
\begin{tabular}{c|cccc|cccc}
\toprule
Type & \multicolumn{4}{c|}{Homophilic hypergraphs} & \multicolumn{4}{c}{Heterophilic hypergraphs}\\
\cmidrule(lr){2-5} \cmidrule(lr){6-9}
Dataset & DBLP-CA & Citeseer & Pubmed & NTU2012 & Congress & House & Senate & Walmart \\
\hline
HGNN \cite{AAAI19-hgnn} & 91.03 {\scriptsize$\pm$0.20} & 62.45 {\scriptsize$\pm$1.16} & 86.44 {\scriptsize$\pm$0.44} & 87.72 {\scriptsize$\pm$1.35} & 91.26 {\scriptsize$\pm$1.15} & 61.39 {\scriptsize$\pm$2.96} & 48.59 {\scriptsize$\pm$4.52} & 62.00 {\scriptsize$\pm$0.24} \\
HyperGCN \cite{nips19-hypergcn} & 89.98 {\scriptsize$\pm$0.43} & 72.76 {\scriptsize$\pm$1.12} & 82.84 {\scriptsize$\pm$0.67} & 86.36 {\scriptsize$\pm$1.86} & 87.43 {\scriptsize$\pm$1.46} & 60.13 {\scriptsize$\pm$1.76} & 50.13 {\scriptsize$\pm$3.21} & 48.74 {\scriptsize$\pm$2.81} \\
HNHN \cite{arxiv20-hnhn} & 86.78 {\scriptsize$\pm$0.29} & 72.64 {\scriptsize$\pm$1.57} & 84.21 {\scriptsize$\pm$0.64} & 86.13 {\scriptsize$\pm$0.76} & 90.89 {\scriptsize$\pm$0.31} & 67.80 {\scriptsize$\pm$2.59} & 50.93 {\scriptsize$\pm$6.33} & 51.18 {\scriptsize$\pm$0.35} \\
HCHA \cite{PR21-hcha} & 90.92 {\scriptsize$\pm$0.22} & 72.42 {\scriptsize$\pm$1.42} & 84.56 {\scriptsize$\pm$0.31} & 87.12 {\scriptsize$\pm$1.23} & 90.43 {\scriptsize$\pm$1.20} & 61.36 {\scriptsize$\pm$2.53} & 48.62 {\scriptsize$\pm$4.41} & 62.35 {\scriptsize$\pm$0.26} \\
UniGCNII \cite{arxiv21-hgnn-uniGCNII} & 91.69 {\scriptsize$\pm$0.19} & 73.05 {\scriptsize$\pm$2.21} & 87.31 {\scriptsize$\pm$0.45} & 89.30 {\scriptsize$\pm$1.33} & 92.43 {\scriptsize$\pm$0.71} & 67.25 {\scriptsize$\pm$2.57} & 49.30 {\scriptsize$\pm$4.25} & 54.45 {\scriptsize$\pm$0.37} \\
AllDeepSets \cite{iclr22-alldeepsets-allsettrans} & 91.27 {\scriptsize$\pm$0.27} & 70.83 {\scriptsize$\pm$1.63} & 87.04 {\scriptsize$\pm$0.52} & 88.69 {\scriptsize$\pm$1.24} & 91.80 {\scriptsize$\pm$1.53} & 67.82 {\scriptsize$\pm$2.40} & 48.17 {\scriptsize$\pm$5.67} & 64.55 {\scriptsize$\pm$0.33} \\
AllSetTransformer \cite{iclr22-alldeepsets-allsettrans} & 91.53 {\scriptsize$\pm$0.23} & 73.08 {\scriptsize$\pm$1.20} & 87.66 {\scriptsize$\pm$0.43} & 88.43 {\scriptsize$\pm$1.43} & 92.16 {\scriptsize$\pm$1.05} & 69.83 {\scriptsize$\pm$5.22} & \textcolor{orange}{69.33 {\scriptsize$\pm$2.20}} & \textcolor{orange}{65.46 {\scriptsize$\pm$0.25}} \\
\hline
Deep-HGNN \cite{arxiv22-deephgnn} & 91.76 {\scriptsize$\pm$0.28} & 74.07 {\scriptsize$\pm$1.64} & 87.42 {\scriptsize$\pm$0.22} & 87.12 {\scriptsize$\pm$1.13} & 92.47 {\scriptsize$\pm$1.06} & \textcolor{blue}{75.26 {\scriptsize$\pm$1.76}} & 68.39 {\scriptsize$\pm$4.79} & 58.64 {\scriptsize$\pm$0.74} \\
ED-HNN \cite{ICLR22-edhnn} & \textcolor{blue}{91.90 {\scriptsize$\pm$0.19}} & 73.70 {\scriptsize$\pm$1.38} & 87.58 {\scriptsize$\pm$0.65} & 88.07 {\scriptsize$\pm$1.28} & \textcolor{blue}{93.28 {\scriptsize$\pm$0.59}} & 72.45 {\scriptsize$\pm$2.28} & 64.79 {\scriptsize$\pm$5.14} & \textcolor{blue}{66.79 {\scriptsize$\pm$0.41}} \\
SheafHyperGNN \cite{NIPS23-sheafhypergnn} & 91.59 {\scriptsize$\pm$0.24} & 74.71 {\scriptsize$\pm$1.23} & 87.68 {\scriptsize$\pm$0.60} & - & 91.81 {\scriptsize$\pm$1.60} & 73.84 {\scriptsize$\pm$2.30} & 68.73 {\scriptsize$\pm$4.68} & - \\
PhenomNN \cite{ICML24-PhenomNN} & \textcolor{orange}{91.86 {\scriptsize$\pm$0.22}} & 74.45 {\scriptsize$\pm$0.96} & 78.12 {\scriptsize$\pm$0.24} & 85.40 {\scriptsize$\pm$0.42} & 88.24 {\scriptsize$\pm$1.47} & 70.71 {\scriptsize$\pm$2.35} & 67.70 {\scriptsize$\pm$5.24} & 63.43 {\scriptsize$\pm$0.43} \\
\hline
FrameHGNN \cite{AAAI25-FrameHGNN} & - & \textcolor{orange}{74.72 {\scriptsize$\pm$2.10}} & \textcolor{blue}{88.73 {\scriptsize$\pm$0.42}} & \textcolor{blue}{89.98 {\scriptsize$\pm$2.02}} & - & 72.82 {\scriptsize$\pm$2.22} & 67.61 {\scriptsize$\pm$5.27} & - \\
KHGNN \cite{AAAI25-KHGNN} & 91.21 {\scriptsize$\pm$0.37} & \textcolor{blue}{74.80 {\scriptsize$\pm$1.10}} & \textcolor{orange}{88.47 {\scriptsize$\pm$0.47}} & \textcolor{orange}{89.60 {\scriptsize$\pm$1.64}} & \textcolor{orange}{92.55 {\scriptsize$\pm$1.03}} & \textcolor{orange}{74.31 {\scriptsize$\pm$2.87}} & \textcolor{blue}{71.23 {\scriptsize$\pm$3.94}} & 65.17 {\scriptsize$\pm$0.79} \\
\textbf{\texttt{HealHGNN} (Ours)} & \textcolor{red}{\textbf{91.98 {\scriptsize$\pm$0.31}}} & \textcolor{red}{\textbf{75.06} {\scriptsize$\pm$1.16}} & \textcolor{red}{\textbf{88.80 {\scriptsize$\pm$0.31}}} & \textcolor{red}{\textbf{90.16 {\scriptsize$\pm$1.78}}} & \textcolor{red}{\textbf{93.45 {\scriptsize$\pm$0.79}}} & \textcolor{red}{\textbf{77.18 {\scriptsize$\pm$2.35}}} & \textcolor{red}{\textbf{76.06 {\scriptsize$\pm$4.13}}} & \textcolor{red}{\textbf{68.22 {\scriptsize$\pm$0.56}}} \\

\bottomrule
\end{tabular}}

\end{table*}

\section{EXPERIMENTS}
\textbf{Objectives.} We evaluate our proposed method on a variety of hypergraph learning tasks and compare it with strong baselines. Specifically, we aim to address the following questions: \textbf{RQ1}: Can our model effectively handle both homogeneous and heterogeneous hypergraphs, as well as scale to large datasets? \textbf{RQ2}: Does our approach alleviate oversquashing in the hypergraph transfer setting and on the Long-Range Graph Benchmark(LRGB) , enabling efficient long-range information propagation? \textbf{RQ3}: Can our method mitigate oversmoothing when stacking deep layers, thereby preserving expressive node and hyperedge representations? \textbf{RQ4}: How critical are the proposed components to the overall effectiveness of our method? Codes are available in \textcolor{blue}{{\url{https://github.com/Mingzhang21/HealHGNN.}}} Further implementation notes are given in Appendix~\ref{reproductiion}. \\ 
\textbf{Experimental Settings.} 
We conduct experiments on ten publicly available datasets spanning various applications: Cora, Cora-CA, DBLP-CA, CiteSeer, and PubMed~\cite{nips19-hypergcn}, as well as NTU2012~\cite{NTU2012}, House~\cite{house}, Congress, Senate~\cite{congress-senate}, and Walmart~\cite{warlmart}.
Detailed descriptions of these baselines, along with the datasets, can be found in Appendix \ref{data-base}. All experiments are conducted on a machine equipped with an NVIDIA GeForce RTX 4090 GPU (24 GB memory), a 16-core Intel(R) Xeon(R) Gold 6430 CPU, and 120 GB RAM.

\begin{table}[h]
\centering
\vspace{-0.1cm}
\caption{Heterophily Experiments}
\label{tab:hetero}
\vspace{-0.2cm}
\resizebox{1.0\linewidth}{!}{
\begin{tabular}{l||ccccccc}
\toprule
 & \multicolumn{7}{c}{Heterophily Level}\\
Methods & 1 & 2 & 3 & 4 & 5 & 6 & 7 \\
\hline
HGNN               & 97.8 & 85.7 & 78.4 & 70.5 & 66.5 & 64.9 & 60.4 \\
HyperGCN           & 85.1 & 77.1 & 75.9 & 72.9 & 69.5 & 65.1 & 60.7 \\
HCHA               & 98.4 & 83.8 & 79.3 & 74.1 & 72.1 & 69.1 & 64.3 \\
\hline
AllDeepSets        & 99.8 & 87.8 & 82.4 & 77.3 & 73.4 & 71.8 & 68.4 \\
AllSetTransformer  & 99.9 & 86.5 & 83.3 & 78.5 & 74.2 & 72.4 & 69.2 \\
ED-HNN             & 99.9 & 91.3 & 88.4 & 85.1 & 81.7 & 77.3 & 76.5 \\
SensHNN            & 100  & 93.7 & 90.1 & 85.7 & 82.4 & 79.4 & 77.6 \\
\hline
\textbf{\texttt{HealHGNN}}               & \textbf{100}  & \textbf{94.1} & \textbf{90.8} & \textbf{87.1} & \textbf{84.3} & \textbf{82.2} & \textbf{79.8} \\
\bottomrule
\end{tabular}}
\vspace{-0.4cm}
\end{table}

\subsection{Hypergraph Node Classification (RQ1)}
\textbf{Setup.} We evaluate the performance of our model on the node classification task using eight real-world hypergraph datasets, and adopt classification accuracy (ACC) as the evaluation metric. These datasets span diverse domains, scales, and CE homophily levels, and are widely used benchmarks in hypergraph learning.  To ensure a fair comparison with baselines, we follow the experimental protocol of~\cite{ICLR22-edhnn}, where each dataset is randomly split into 50\% training, 25\% validation, and 25\% test nodes. Each case is repeated 10 times with different random splits, and we report the mean accuracy along with the standard deviation.\\
\textbf{Results.} Our model achieves competitive performance across all real-world datasets—matching strong baselines on homophilic hypergraphs and delivering clear gains on heterophilic ones. Notably, on the Senate dataset we outperform the second-best method by 5\%, underscoring the benefits of our heterophily-agnostic design.

\subsection{Synthetic Hypergraph under Heterophily}
\textbf{Setup.} The oversmoothing problem can hurt the performance of hypergraph networks, especially on heterophilic graphs. To investigate this, we evaluate hypergraph networks under varying heterophily. Following \cite{logc25-SensHNN} , we use synthetic hypergraphs generated with a contextual stochastic block model. Each hypergraph contains 5000 nodes divided into two classes and 1000 hyperedges, with the heterophily level controlled by adjusting the proportion of nodes from class 0.\\ 
\textbf{Results.} As observed in Table \ref{tab:hetero}, when the heterophily level is below three, our method performs slightly better than the other models. However, as the heterophily level increases, our approach demonstrates a significant improvement. This trend is reflected in Table \ref{tab:nctask}, where our method outperforms the baselines in heterophilic hypergraphs, showcasing its  ability to handle challenging heterophilic environments and mitigate the oversmoothing effect.

\subsection{Hypergraph Transfer (RQ2)}
\textbf{Setup.} Inspired by \cite{23osq}\cite{logc25-SensHNN}, we consider a hypergraph transfer task, where the objective is to transfer a label from a source node to a target node across hyperedges. Nodes are initialized with random-valued features, and we assign labels “1” and “0” to source and target nodes, respectively. We design four types of hypergraph distributions (see Figure~\ref{fig:transfertype} for a visual illustration): HyperEdgeSingle (HES($n$)) uses a single hyperedge of size $n$; HyperEdgePath (HEP($n$,$m$)) places the source in an 
$n$-node hyperedge and connects it to the target via a length-$m$ chain of degree-$2$ hyperedges; HyperEdgeRing (HER($n$,$m$)) forms a ring of $n$-node hyperedges, each sharing one node with the next, with the source–target distance controlled by $m$; and HyperEdgeDumbbell($n$,$m$) consists of two HER($n$,$m$) rings linked by a bridge of four degree-$2$ hyperedges. Importantly, the number of model layers is set equal to the required transfer distance.\\
\textbf{Results.} Figure~\ref{fig:transfer} reports the performance of different models on the hypergraph transfer tasks. Overall, HGNN\cite{AAAI19-hgnn} exhibits clear limitations: due to its reliance on standard hypergraph Laplacian propagation, node features tend to become homogenized over long distances, making it difficult to effectively transfer the label information from the source node to the target node. Similarly, the baseline methods struggle as the transfer distance increases, since their varying degrees of feature compression further hinder accurate long-range information transfer. In contrast, \texttt{HealHGNN} demonstrates consistently strong performance. On HES tasks, it maintains high accuracy, as nodes within the same hyperedge can still preserve discriminative information. More importantly, on HEP, HER, and HED, where the structures are more complex and the required propagation distance is longer (as illustrated in Fig~\ref{fig:transfertype}), thereby substantially increasing task difficulty, \texttt{HealHGNN} consistently outperforms all other models. This highlights its robustness and effectiveness in capturing both local and global structural dependencies.\\
\begin{figure}[ht]
    \vspace{-0.5cm}
    \centering
    \includegraphics[width=0.95\linewidth]{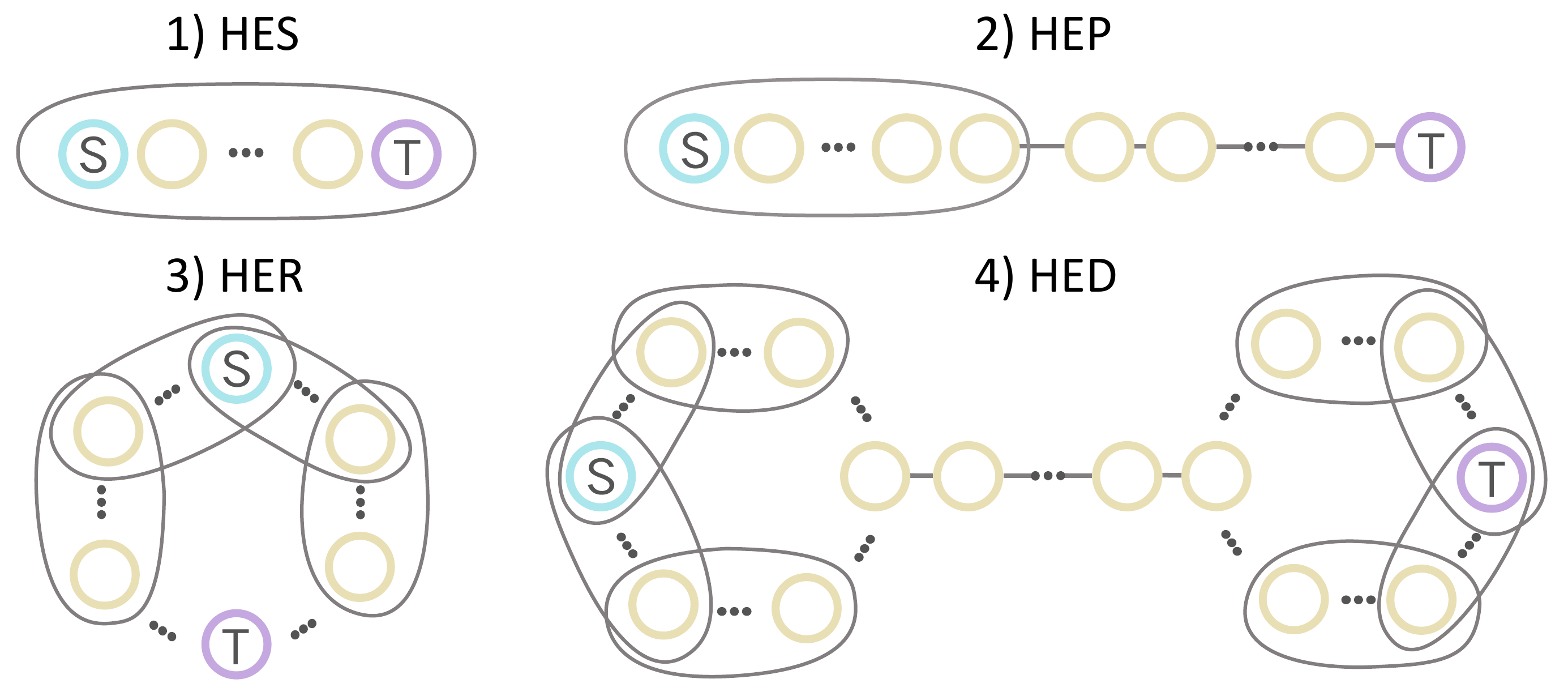}
         \vspace{-0.1cm} 
    \vspace{-0.3cm}
    \caption{Illustration of the Topological Structures}
            \label{fig:transfertype}
    \vspace{-0.3cm}
\end{figure}
 \vspace{-0.3cm}
\subsection{Long-Range Graph Benchmark (RQ2)}
\textbf{Setup.} We further examine the capacity of \texttt{HealHGNN} to model long-range dependencies using the Long-Range Graph Benchmark (LRGB)~\cite{LRGB}. In this benchmark, we consider two peptide datasets, Peptides-func and Peptides-struct, which are specifically designed to evaluate models on graph tasks requiring the capture of long-range interactions within molecular graphs. \\
\textbf{Results.} As shown in Figure~\ref{fig:lrgb}, graph transformer–based methods such as GT~\cite{20GT} and GRIT~\cite{GRIT} outperform classical MPNNs including GCN, HGNN~\cite{AAAI19-hgnn}, and GUMP~\cite{24GUMP}, owing to their ability to leverage global receptive fields. KHGNN~\cite{AAAI25-KHGNN} extends the receptive field by aggregating information along $k$-hop shortest paths, but this comes at the cost of computational overhead. In contrast, \texttt{HealHGNN} achieves superior performance while maintaining linear complexity $\mathcal{O}(|\mathcal{V}| + |\mathcal{E}|)$. By incorporating boundary awareness, it ensures that essential signals can be transferred efficiently across distant regions of the hypergraph.
\begin{figure}
    \centering
\includegraphics[width=\linewidth]{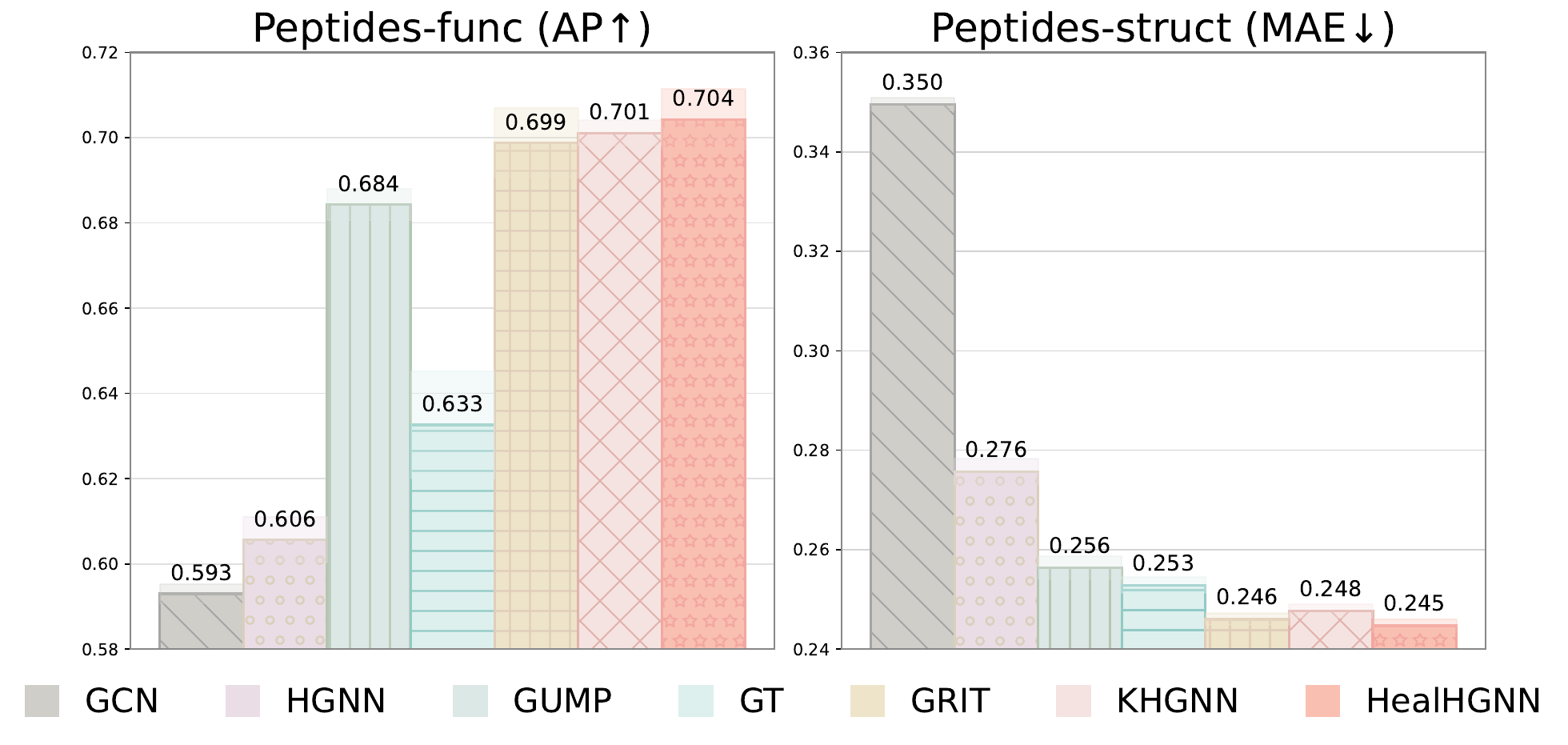}
    \vspace{-0.7cm}
    \caption{Performance comparison on LRGB tasks.}
            \label{fig:lrgb}
            \vspace{-0.65cm}
\end{figure}

\begin{figure*}[t]
    \centering
    \includegraphics[width=0.9\linewidth]{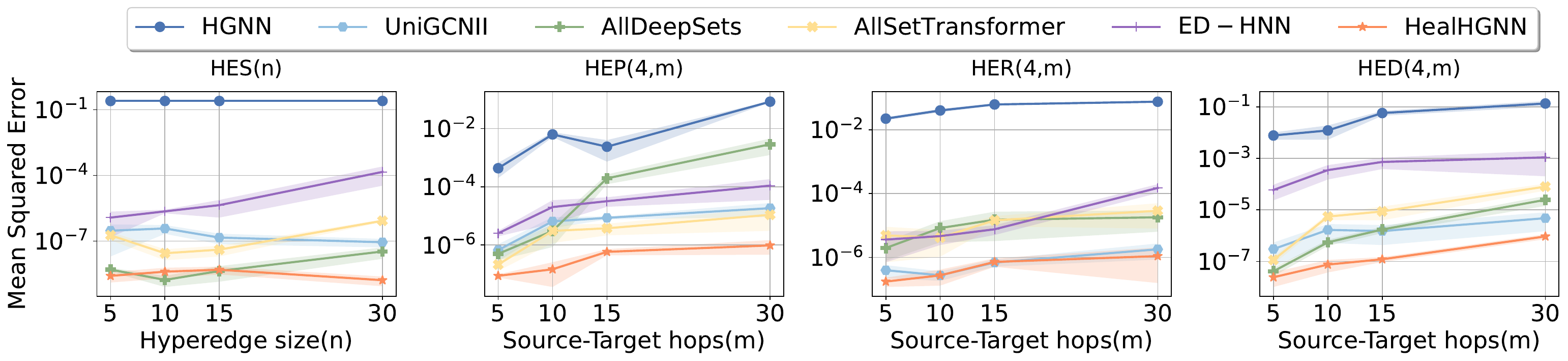}
         \vspace{-0.1cm} 
    \vspace{-0.3cm}
    \caption{Information Transfer Performance on the Four Hypergraph Types with Hyperedge Size 4.}
        \vspace{-0.6cm}
            \label{fig:transfer}
\end{figure*}

\begin{figure*}[t]
    \centering
    \vspace{0.3cm}
    \includegraphics[width=0.9\linewidth]{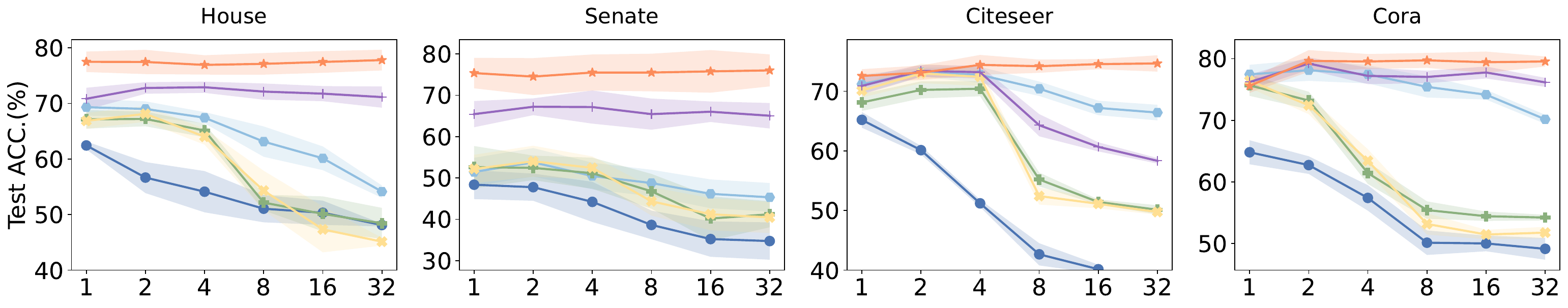}
         \vspace{-0.3cm} 
    \vspace{-0.1cm}
    \caption{Comparison of method performance across varying neural network depths on four datasets.}
        \vspace{-0.3cm}
            \label{fig:ringlayers}
\end{figure*}

\subsection{Neural Network Depth (RQ3)}
\textbf{Setup.} We compare \texttt{HealHGNN} against a range of representative baselines. To examine the effect of model depth on performance, we conduct experiments on four benchmark datasets: two heterophilic hypergraphs (House and Senate) and two homophilic hypergraphs (Citeseer and Cora). The number of layers is varied across $\{1, 2, 4, 8, 16, 32\}$ to study the relative performance of shallow versus deep architectures. The results are summarized in Figure~\ref{fig:ringlayers}.\\
\textbf{Results.} Overall, our method demonstrates stability as the number of layers increases, showing no significant performance fluctuations and even a slight upward trend. 
As discussed previously, the effect of oversmoothing becomes more severe in heterophilic hypergraphs, amplifying the performance gap in favor of \texttt{HealHGNN}.


\subsection{Ablation Study (RQ4)}

To evaluate the contribution of each component in \texttt{HealHGNN}, we create several variants by replacing the learnable coefficients $\gamma$, $\beta$, and hyperparameter $\mu$ with fixed constants ($\gamma_0$, $\beta_0$, $\mu_0$). Setting $\gamma_v=0$, $\beta_v=0$, or $\mu=0$ disables external inputs, boundary interactions, or hyperedge modulation, respectively, while $\gamma_v=\beta_v=\mu=0$ reduces \texttt{HealHGNN} to a standard HGNN.
\section{RELATED WORK}
\subsubsection*{\textbf{Hypergraph Representation Learning.}} Hypergraph learning has emerged as a powerful framework for modeling high-order relations that cannot be adequately captured by pairwise graphs. Foundational models such as HGNN~\cite{AAAI19-hgnn} and HyperGCN~\cite{nips19-hypergcn} established the basis for hypergraph neural networks by extending convolutional operators to the hypergraph setting. Subsequent approaches, including HNHN \cite{arxiv20-hnhn} and HCHA \cite{PR21-hcha}, enhanced representational capacity by incorporating nonlinear transformations and attention mechanisms. Recent works further advanced the field along theoretical, algorithmic, and efficiency dimensions. ORCHID~\cite{ICLR23-ricci} unified Ollivier–Ricci curvature on hypergraphs, providing new analytical tools. HyperGCL ~\cite{nips22-hyeprGCL} leveraged contrastive learning to improve generalization. To address scalability, LightHGNN~\cite{ICLR24-lightHGGNN} distilled HGNNs into lightweight MLPs, while TF-HNN~\cite{ICLR25-TF-HNN} reduced training cost via precomputed structural information.

\begin{table}[t]
    \centering 
    \caption{Ablation Study}
    \vspace{-0.35cm}
    \label{tab:model-comparison}
    \resizebox{0.45\textwidth}{!}{ 
     \begin{tabular}{l||cc||cc}
        \toprule
        \textbf{Model} & \textbf{Citeseer} & \textbf{Cora-CA} & \textbf{House} & \textbf{Senate} \\
        \hline
        \textbf{\texttt{HealHGNN}} & $\mathbf{74.56}{\scriptstyle\pm1.16}$ & $\mathbf{84.78}{\scriptstyle\pm1.57}$ & $\mathbf{77.18}{\scriptstyle\pm2.35}$ & $\mathbf{76.06}{\scriptstyle\pm4.13}$ \\
        $\gamma_0, \beta_0$, $\mu_0$  & $73.67{\scriptstyle\pm0.96}$ & $83.36{\scriptstyle\pm1.23}$ & $75.73{\scriptstyle\pm1.79}$ & $75.13{\scriptstyle\pm2.17}$ \\
        $\gamma_v =0$ &$71.15{\scriptstyle\pm1.58}$ & $82.11{\scriptstyle\pm1.34}$ & $72.27{\scriptstyle\pm2.34}$ & $71.19{\scriptstyle\pm3.16}$ \\
        $\beta_v =0$ & $66.46{\scriptstyle\pm1.17}$ & $81.41{\scriptstyle\pm1.18}$ &  $65.79{\scriptstyle\pm3.78}$ & $62.17{\scriptstyle\pm0.89}$ \\
        $\mu=0$ & $72.46{\scriptstyle\pm1.67}$ & $82.43{\scriptstyle\pm1.07}$ & $74.28{\scriptstyle\pm2.79}$ & $75.23{\scriptstyle\pm2.17}$ \\
           $\gamma_v, \beta_v, \mu =0$ &$63.04{\scriptstyle\pm1.72}$ & $79.16{\scriptstyle\pm1.96}$& $62.16{\scriptstyle\pm2.36}$ &  $50.89{\scriptstyle\pm3.36}$\\
        HGNN & $62.45{\scriptstyle\pm1.16}$ & $78.64{\scriptstyle\pm1.65}$ & $61.39{\scriptstyle\pm2.96}$ & $48.59{\scriptstyle\pm4.52}$ \\
        \bottomrule
    \end{tabular}
    }
    \vspace{-0.35cm}
\end{table}

\subsubsection*{\textbf{Oversmoothing and Oversquashing.}}
Deep hypergraph networks face two major challenges—oversmoothing and oversquashing—that constrain their expressive power. To alleviate oversmoothing, various architectures aim to preserve feature diversity during deep propagation. Deep-HGNN~\cite{arxiv22-deephgnn} employs high-order polynomial filters, while ED-HNN~\cite{ICLR22-edhnn} and UniGCNII~\cite{arxiv21-hgnn-uniGCNII} extend diffusion and message-passing to maintain discriminative representations. FrameHGNN~\cite{AAAI25-FrameHGNN}, SheafHyperGNN~\cite{NIPS23-sheafhypergnn}, PhenomNN~\cite{ICML24-PhenomNN}, and HDS~\cite{ICLR24-HDS} further introduce framelet transforms, sheaf Laplacians, or dynamic modeling to reinject node-specific signals and replenish Dirichlet energy. 
Methods such as SensHNN~\cite{logc25-SensHNN} and KHGNN~\cite{AAAI25-KHGNN} enhance message flow via direct interactions between distant nodes. Notably, these rewiring-oriented approaches may exacerbate oversmoothing, as denser pathways tend to homogenize node features, highlighting a key trade-off in hypergraph network design.

\subsubsection*{\textbf{Riemannian Geometry and Graph Neural Network.}} Euclidean space has long been the default of graph representation learning, but Riemannian manifolds have emerged as a compelling alternative~\cite{sun2019dna,sun2020perfect,sun2021heterogeneous,sun2021hyperbolic,sun2022self,sun2023deepricci,sun2024lsenet,sun2024spiking,sun2025deeper,sun2025riemanngfm,sun22,
nips18hnn,iclr23lantentGraphProduct}. The natural extension to \emph{hypergraphs} on manifolds is also gaining traction recently. \cite{25cluster} generates hypergraphs via sparse representation and reformulates orthogonality-constrained optimization as unconstrained Riemannian optimization on the Grassmann manifold. \cite{25sephere} embeds hypergraphs in hyperspherical manifolds to capture higher-order geometric relations through sphere-based message passing.
\section{CONCLUSION}
This paper presents a heterophily-agnostic hypergraph neural network (\texttt{HealHGNN}) through Riemannian heat flow. We start with the theoretical discovery that long-range dependence is related to hypergraph bottlenecks, and then establish the Adaptive Local Exchanger that locally adjusts subhypergraph bottlenecks in the functional space, creating a new message passing mechanism with theoretical guarantees. Consequently, we formulate \texttt{HealHGNN} that bidirectionally propagates messages from nodes and hyperedges with linear complexity. Extensive experiments across multiple benchmarks demonstrate that HealHGNN consistently improves performance over competitive baselines.
\begin{acks}
This work is supported in part by NSFC under grants 62202164, U25B2029, 62072052 and 62322202, and the National Key Research and Development Program of China (2024YFF0907401). Philip S. Yu is supported in part by NSF under grants III-2106758, and POSE-2346158.
\end{acks}
\newpage
\newpage

\bibliographystyle{ACM-Reference-Format}
\bibliography{www26}
\appendix

\section{GLOSSARY OF NOTATIONS}
\label{notation}
We provide a summary of the notation employed in this paper, along with their respective descriptions, in Table~\ref{tab:notation}.
\begin{table}[h]
\centering
\vspace{-0.15in}
\caption{Notations.}
\vspace{-0.15in}
\label{tab:notation}
\resizebox{\linewidth}{!}{
\begin{tabular}{|c|c|}
\toprule
\textbf{Notation} & \textbf{Description}   \\
\midrule
$\mathcal{M}$  & A Riemannian manifold\\
$h_\mathcal{M}$ & The Cheeger's constant of $\mathcal{M}$ \\
$\xi$ & A small value variable \\
$\varepsilon(\cdot)$ & The radius function \\
  $\lambda_i, \psi_i$ & The $i$-th eigenvalue and eigenfunction of the Laplace-Beltrami operator.\\
  $\partial_t$ & The parital derivative w.r.t. $t$\\ 
  $u(t, \boldsymbol{x})$  & The value of the heat equation solution\\
  $\frac{\partial}{\partial n}$ & The derivative in the direction of the outward normal \\
  $K(t)$ & The heat kernel\\
   $\lambda$ & The spectral gap of a closed manifold\\
\midrule
     $\mathcal{G}$ & A hypergraph with node set $\mathcal{V}$, edge set $\mathcal{E}$ and incidence matrix \textbf{H}\\
     $\mathcal{L}$ &  The normalized hypergraph Laplacian \\
    $\varphi$ & A multi-layer perceptron (MLP) \\ 
     $Z$ & The initial condition or function for the heat equation \\
    $\mathbb{I}$ & The indicator function \\
   $E(X)$ & The Dirichlet energy of feature matrix $X$ \\
   
$\alpha, \beta$ & The coefficients of the Robin condition \\
$S$ & The interior node set of hypergraph $\mathcal{G}$ \\
$S_e$ &The interior edge set of hypergraph $\mathcal{G}$ \\
$\partial_v S$ & The boundary node set of hypergraph $\mathcal{G}$ \\
   $\partial_e S$ & The boundary edge set of hypergraph $\mathcal{G}$ \\
\midrule
   $\mathbf{\Gamma}_{\partial \mathcal{S}}$ & The source term of nonhomogeneous Robin condition for boundary sets \\
   $\mathbf{F}_\mathcal{S}$ & The steady state of the heat equation for interior sets\\
   $\mathbf{\Gamma}$ & The concatenation of $\mathbf{F}_\mathcal{S}$ and $\mathbf{\Gamma}_{\partial \mathcal{S}}$\\
  \bottomrule
\end{tabular}}
\end{table}

\vspace{-0.15in}
\section{Theorems and Proofs}
This section provides detailed proofs of the theorems and the derivation of the heat-equation solution.
\vspace{-0.1in}
\subsection{Proofs of Theorem \ref{bdcondition}}\label{prf:flexibility of rc}
\textbf{Theorem \ref{bdcondition} (Flexibility of the Robin Condition).} Consider a thin tube–like region $S$ of length $L$, whose cross–section at position $s \in [0,L]$ is a small ball of radius $\varepsilon(s)$. The boundary of this region is given by $\partial S = \{0,L\} \times \partial \mathcal{B}_{\varepsilon(s)}$. Let $\lambda_D$, $\lambda_N$, and $\lambda_{R}$ denote the spectral gaps of the Laplace operator under Dirichlet, Neumann, and Robin boundary conditions, respectively.
When the radius is constant, i.e., $\varepsilon(s) = \varepsilon_0$, the spectral gaps satisfy the ordering
$\lambda_D \;\geq\; \lambda_{R} \;\geq\; \lambda_N$.

\begin{proof}
Let $S$ be a thin tube-like region of length $L$, whose cross-section at position $s \in [0,L]$ is a small ball of radius $\varepsilon(s)$.  Assume that $\varepsilon(s) : [0,L] \to (0,\delta)$ with $L \gg \delta$ for a small $\delta>0$.
First, we recall the Cheeger and Buser's  Inequality for a  closed manifold $\mathcal{M}$\cite{1982A}: if the Ricci curvature of $\mathcal{M}$ satisfies
\begin{equation}
\mathrm{Ric}_M \ge -(n-1)\,\xi^2, \quad \xi \ge 0,    
\end{equation}
then the spectral gap $\lambda$ of the Laplace–Beltrami operator satisfies
\begin{equation}
\frac{h_\mathcal{M}^2}{4} \le \lambda < C(\xi \cdot h_\mathcal{M} + h_\mathcal{M}^2),    
\end{equation}
where $h_\mathcal{M}$ is the Cheeger constant and $C$ is a constant. This inequality shows that the spectral gap is closely related to the “bottleneck” of the manifold. Taking $S$ as a submanifold embedded in a closed manifold, the spectral gap $\lambda \sim \inf \varepsilon^{n-1}$. For the thin tube $S$, when the cross-section radius is constant, i.e., $\varepsilon(s) = \varepsilon_0$, the effect of boundary conditions on the spectral gap can be analyzed through the behavior of the heat kernel. Under the Dirichlet condition, the heat kernel decays at a rate governed by $\exp(-\varepsilon_0^{-2} t)$, reflecting the strong suppression of oscillations at the boundary. In contrast, under the Neumann condition, the decay is determined by $\exp(-L^{-2} t)$, which corresponds to much slower attenuation along the longitudinal direction of the tube.    
Since $\varepsilon_0$ is small, we have $\varepsilon_0^{-2}$ large and $\varepsilon_0^{\,n-1}$ small. Combining these scaling observations with Cheeger-type estimates, it follows that the spectral gaps satisfy the ordering
\begin{equation}
\lambda_D \;\ge\; \lambda_R \;\ge\; \lambda_N.
\end{equation}
\end{proof}
\subsection{Proofs of Theorem \ref{smooth}}\label{prf:Spectral Gap and Dirichlet Energy}
\textbf{Theorem \ref{smooth} (Spectral Gap and Dirichlet Energy).} Let $0 = \lambda_0 < \lambda_1 \le \lambda_2 \le \dots$ be the eigenvalues of the hypergraph Laplacian $\mathcal{L}$, then the Dirichlet Energy $E(x):=x^\top \mathcal{L} x$ satisfies
\begin{equation}
E(u_k(t)) 
\le e^{-2\lambda_1 (t - t_k)} E(Z_k).    
\end{equation}
In particular, the Dirichlet energy decays at least exponentially with rate $2\lambda_1$, where $\lambda_1$ is the spectral gap.

\begin{proof}
Since $\mathcal{L}$ is symmetric, there exists an orthonormal eigenbasis $\{\psi_i\}$ with $\mathcal{L}\psi_i=\lambda_i \psi_i$.
We expand the initial state as
\begin{equation}
Z_k = \sum_i c_i \psi_i, \qquad c_i := \psi_i^\top Z_k.    
\end{equation}
The closed-form solution of the diffusion equation is
\begin{equation}
u_k(t) = e^{-(t-t_k)\mathcal{L}}Z_k = \sum_i c_i e^{-\lambda_i (t-t_k)} \psi_i .   
\end{equation}
Now consider the Dirichlet energy:
\begin{equation}
\begin{aligned}
E(u_k(t)) 
&= u_k(t)^\top \mathcal{L} u_k(t) \\
&= \Big(\sum_i c_i e^{-\lambda_i \Delta t} v_i\Big)^\top 
   \Big(\sum_j \lambda_j c_j e^{-\lambda_j \Delta t} v_j\Big), 
\end{aligned}    
\end{equation}
where $\Delta t := t-t_k \ge 0$.
Using orthonormality $(\psi_i^\top \psi_j=\delta_{ij})$, this reduces to
\begin{equation}
E(u_k(t)) = \sum_i \lambda_i |c_i|^2 e^{-2\lambda_i \Delta t}.
\end{equation}
Since for all $i\ge 1$, we have $\lambda_i \ge \lambda_1$, it follows that
\begin{equation}
E(u_k(t)) \le e^{-2\lambda_1 \Delta t} \sum_{i\ge 1} \lambda_i |c_i|^2,    
\end{equation}
The $i=0$ term vanishes because $\lambda_0=0$. Hence
\begin{equation}
E(u_k(t)) \le e^{-2\lambda_1 (t-t_k)} E(Z_k).    
\end{equation}
This shows that the Dirichlet energy decays at least exponentially with rate $2\lambda_1$, where $\lambda_1$ is the spectral gap of the Laplacian.
\end{proof}

\subsection{Proofs of Theorem \ref{source}}\label{prf:source}
\textbf{Theorem \ref{source} (Mode Preservation under Source Energy Injection).} Consider the diffusion equation with source term $\partial_t u_k(t,\mathbf{x}) = -\mathcal{L}\,u_k(t,\mathbf{x}) + f(t,\mathbf{x})$. Expanding in the Laplacian eigenbasis gives $u(t) = \sum_i \hat u_i(t)\psi_i$, $f(t) = \sum_i \hat f_i(t)\psi_i$ with coefficients $\hat u_i(t) = \langle u(t), \psi_i \rangle$, $\hat f_i(t) = \langle f(t), \psi_i \rangle$, and each mode satisfies $\dot{\hat u}_i(t) = -\lambda_i \hat u_i(t)\\ + \hat f_i(t)$. If the source persistently excites mode $i$ over a time step $\Delta t = t_{k+1} - t_k$, i.e., $\hat f_i(s) \ge \eta_i > 0$ for all $s \in [t_k, t_{k+1}]$, then its amplitude is bounded below by
\begin{align}
\hat u_i(t_{k+1}) \ge \frac{\eta_i}{\lambda_i} (1 -e^{-\lambda_i \Delta t}) + e^{-\lambda_i \Delta t} \hat u_i(t_k).    
\end{align}
showing that persistent energy injection prevents the mode from vanishing.
\begin{proof}
Let $\mathcal{L}$ be a symmetric hypergraph Laplacian with eigenpairs $(\lambda_i, \psi_i)$. Expanding the node features $u(t)$ and source term $f(t)$ in the Laplacian eigenbasis, we have
\begin{align}
u(t) &= \sum_i \hat u_i(t) \psi_i, & 
f(t) &= \sum_i \hat f_i(t) \psi_i, \\
\hat u_i(t) &= \langle u(t), \psi_i \rangle, & 
\hat f_i(t) &= \langle f(t), \psi_i \rangle.
\end{align}
Projecting the diffusion equation $\partial_t u(t) = -\mathcal{L} u(t) + f(t)$ onto $\psi_i$ yields a decoupled ODE for each mode
\begin{align}
\frac{d}{dt} \hat u_i(t) = -\lambda_i \hat u_i(t) + \hat f_i(t), \quad \hat u_i(0) = \langle u_0, \psi_i \rangle.    
\end{align}
Considering the discrete time step $\Delta t = t_{k+1} - t_k$, the solution is given by Duhamel’s formula:
\begin{align}
\hat u_i(t_{k+1}) = e^{-\lambda_i \Delta t} \hat u_i(t_k) + \int_{t_k}^{t_{k+1}} e^{-\lambda_i (t_{k+1}-s)} \hat f_i(s)\, ds.    
\end{align}
Assume that the source injects persistent energy into mode $i$ during this interval, i.e., $\hat f_i(s) \ge \eta_i > 0$ for all $s \in [t_k, t_{k+1}]$. Then the integral term can be lower-bounded as
\begin{align}
\int_{t_k}^{t_{k+1}} e^{-\lambda_i (t_{k+1}-s)} \hat f_i(s)\, ds& \ge \eta_i \int_{t_k}^{t_{k+1}} e^{-\lambda_i (t_{k+1}-s)} ds\\ &= \frac{\eta_i}{\lambda_i} (1 - e^{-\lambda_i \Delta t}).    
\end{align}
Combining this with the decaying contribution from the previous step, we obtain the lower bound for the mode amplitude at $t_{k+1}$:
\begin{equation}
\hat u_i(t_{k+1}) \ge \frac{\eta_i}{\lambda_i} (1 - e^{-\lambda_i \Delta t}) + e^{-\lambda_i \Delta t} \hat u_i(t_k).      
\end{equation}
This shows that as long as the source injects persistent energy during each time step, the amplitude of the mode cannot vanish completely and thereby alleviates oversmoothing.
\end{proof}

\subsection{Proofs of Equivalence}
\label{equivalence}
Let $w(t,\mathbf{x})$ satisfy $h\!\circ w=\gamma$ on $\partial_v\mathcal{S}$. Define the lifted unknown $v:=u-w$. By linearity,
\begin{equation}
 \partial_t v=\partial_t u-\partial_t w,\qquad -\mathcal{L}v=-\mathcal{L}u+\mathcal{L}w.   
\end{equation}
Assume the original problem is
\begin{align}
\begin{cases}
\partial_t u(t,\mathbf{x})=-\mathcal{L}u(t,\mathbf{x}),&\mathbf{x}\in \mathcal{S},\\[4pt]
h(u)=\gamma\ , &\mathbf{x} \in\partial_{v}\mathcal{S},\\[4pt]
u(0,\mathbf{x})=Z(\mathbf{x}).
\end{cases}    
\end{align}
Substituting $u=v+w$ gives
\begin{equation}
\partial_t v=-\mathcal{L}v+\underbrace{\big(-\mathcal{L}w-\partial_t w\big)}_{:=\,\Gamma(w)},  
\end{equation}
and the boundary becomes $h(v)=h(u)-h(w)=\gamma-\gamma=0$ on $\partial_v\mathcal{S}$; the initial condition is $v(0,\mathbf{x})=Z(\mathbf{x})-w(0,\mathbf{x})$.
Conversely, if $v$ solves
\begin{equation}
 \begin{cases}
\partial_t v(t,\mathbf{x})=-\mathcal{L}v(t,\mathbf{x})+\Gamma(w),\\[4pt] h(v)=0, 
\\[4pt] v(0,\mathbf{x})=Z(\mathbf{x})-w(0,\mathbf{x}).
\end{cases}   
\end{equation}\\
then setting $u:=v+w$ yields $\partial_t u=-\mathcal{L}u$, $h(u)=\gamma$ on $\partial_v\mathcal{S}$, and $u(0,\mathbf{x})=Z(\mathbf{x})$. Therefore the two formulations are equivalent.

\subsection{Derivation of the Heat Equation Solution}
\label{solution}
Recall the coupled update rules:
\begin{equation}
\scalebox{0.68}{
$
\begin{cases}
\mathcal{L}_{\mathcal{V}}^{S\to S} X_{S} + \mathcal{L}_{\mathcal{V}}^{\partial_v S\to S} X_{\partial_{v}S}
- \lambda \Big( \mathcal{P}^{S_e\to S}_{\mathcal{E}\to \mathcal{V}} X_{S_{e}} + \mathcal{P}^{\partial_e S\to S}_{\mathcal{E}\to \mathcal{V}} X_{\partial_{e}{S}} \Big) = F_\mathcal{S}, \\[1ex]
\mathrm{diag}(\alpha_v) X_{\partial_{v}{S}} + \mathrm{diag}(\beta_v) \mathcal{L}_{\mathcal{V}}^{S\to \partial_v S} X_{S}
- \lambda \Big( \mathcal{P}^{S_e\to \partial_v S}_{\mathcal{E}\to \mathcal{V}} X_{S_{e}} + \mathcal{P}^{\partial_e S\to \partial_v S}_{\mathcal{E}\to \mathcal{V}} X_{\partial_{e}S} \Big) = \Gamma_{\partial_{v}{S}}, \\[1ex]
\mathcal{L}_{\mathcal{E}}^{S_e\to S_e} X_{S_{e}} +\mathcal{L}_{\mathcal{E}}^{\partial_e S\to S_{e}} X_{\partial_{e}{S}}
- \mu \Big( \mathcal{P}^{S\to S_e}_{\mathcal{V}\to \mathcal{E}} X_{S} + \mathcal{P}^{\partial_v S\to S_e}_{\mathcal{V}\to \mathcal{E}} X_{\partial_{v}{S}} \Big) = F_{S_{e}}, \\[1ex]
\mathrm{diag}(\alpha_e) X_{\partial_{e}{S}} + \mathrm{diag}(\beta_e) \mathcal{L}_{\mathcal{E}}^{S_e \to \partial_e S} X_{S_{e}}
- \mu \Big( \mathcal{P}^{S\to \partial_e S}_{\mathcal{V}\to \mathcal{E}} X_{S} + \mathcal{P}^{\partial_v S\to \partial_e S}_{\mathcal{V}\to \mathcal{E}} X_{\partial_{v}{S}} \Big) = \Gamma_{\partial_{e}{S}}.
\end{cases}
$
} 
\end{equation}
Then it can be expressed as a compact block-matrix form, which is convenient for unified analysis and computation.
\begin{align}
A
\begin{bmatrix}
X_V\\[2pt] X_E
\end{bmatrix}
=
\begin{bmatrix}
\Gamma_V\\[2pt] \Gamma_E
\end{bmatrix},\quad  
X=\begin{bmatrix}
X_{S}\\[2pt] X_{\partial{S}} \\
\end{bmatrix},\quad
\Gamma =\begin{bmatrix}
F_{S}\\[2pt] \Gamma_{\partial{S}}
\end{bmatrix}.
\end{align}
For the heat diffusion system, let $\mathbf{D}$, $\mathbf{U}$, and $\mathbf{V}$ denote the diagonal, upper, and lower blocks of the system matrix $\mathbf{A}$, respectively. 
At the $k$-th iteration, the learnable diagonal terms are updated as
\begin{equation}
\scalebox{0.8}{
$
\operatorname{diag}(\alpha_v) = \alpha(\mathbf{X}_{\mathcal{V}}^{(k)}; \theta_{\alpha}),\quad
\operatorname{diag}(\beta_v) = \beta(\mathbf{X}_{\mathcal{V}}^{(k)}; \theta_{\beta}),\quad
\mathbf{\Gamma} = \gamma(\mathbf{X}_{\mathcal{V}}^{(k)}; \theta_{\gamma}).
$}
\end{equation}
\begin{equation}
\scalebox{0.8}{
$
\operatorname{diag}(\alpha_e) = \alpha(\mathbf{X}_{\mathcal{E}}^{(k)}; \theta_{\alpha}),\quad
\operatorname{diag}(\beta_e) = \beta(\mathbf{X}_{\mathcal{E}}^{(k)}; \theta_{\beta}),\quad
\mathbf{\Gamma} = \gamma(\mathbf{X}_{\mathcal{E}}^{(k)}; \theta_{\gamma}).
$}
\end{equation}
The feature update rule is then given by the block iteration scheme
\begin{equation}
\mathbf{X}^{(k+1)}=\mathbf{D}^{-1}(\mathbf{V}+\mathbf{U})\mathbf{X}^{(k)}+\mathbf{D}^{-1}\mathbf{\Gamma}    
\end{equation}
Accordingly, the update of node $i$ at iteration $k+1$ takes the form
\begin{align}
\label{nodeupdate}
 x_i^{(k+1)}
= \underbrace{\sum_{j:\, j \sim i} (\mathcal{P}_{\mathcal{V} \to \mathcal{V}})_{ij} \, x_j^{(k)}}_{\text{node-to-node aggregation}}
+ \underbrace{\sum_{e:\, e \sim i} (\mathcal{P}_{\mathcal{E} \to \mathcal{V}})_{ei} \, x_e^{(k)}}_{\text{edge-to-node aggregation}}
+ \gamma_i,   
\end{align}
where $j \sim i$ denotes that $i$ and $j$ share at least one hyperedge, and $e \sim i$ means that node $i$ belongs to hyperedge $e$.
The entries of the propagation operators are specified as follows. For node-to-node aggregation,
\begin{equation}
(\mathcal{P}_{\mathcal{V}\to \mathcal{V}})_{ij} =
\begin{cases}
\dfrac{2}{\sqrt{d_i d_j}}\displaystyle\sum_{e:\,i,j\in e}\dfrac{1}{d_e} &
i\sim j,\ i\in \mathcal{S},\ j\in \mathcal{S},\\[8pt] 
\dfrac{1}{\sqrt{d_i d_j}}\displaystyle\sum_{e:\,i,j\in e}\dfrac{1}{d_e} &
i\sim j,\ i\in \mathcal{S},\ j\in \partial{_{v}\mathcal{S}},\\[8pt] 
\dfrac{\beta_v}{\alpha_v}\dfrac{1}{\sqrt{d_i d_j}}\displaystyle\sum_{e:\,i,j\in e}\dfrac{1}{d_e} &
i\sim j,\ i\in \partial_{v}{\mathcal{S}},\ j\in \mathcal{S},\\[8pt] 
0&\text{otherwise.}
\end{cases}    
\end{equation}
whereas for edge-to-node aggregation,
\begin{equation}
(\mathcal{P}_{\mathcal{E}\to \mathcal{V}})_{ij} =
\begin{cases}
\dfrac{\mu}{1}\dfrac{1}{\sqrt{d_i d_j}} &
i\sim j,\ i\in \mathcal{S},\ j\in \mathcal{S}_{e},\\[8pt] 
\dfrac{\mu}{\alpha_{v}}\dfrac{1}{\sqrt{d_i d_j}} &
i\sim j,\ i\in \partial{_{v}\mathcal{S}},\ j\in \mathcal{S}_{e},\\[8pt] 
0&\text{otherwise.}
\end{cases}    
\end{equation}
For compactness, define $S_{ij} := \sum_{e:\, i,j \in e} \frac{1}{d_e}$ as the total inverse degree of hyperedges shared by nodes $i$ and $j$, where $d_e$ is the degree of hyperedge $e$, and $p_i=\frac{\beta_v}{\alpha_v}$. We define the indicator function $I_i := \mathbb{I}(i \in \mathcal{S})$, which equals $1$ if node $i$ belongs to the interior set $\mathcal{S}$ and $0$ otherwise.
Then the node-to-node propogation operator can be written as
\begin{equation}
\mathcal{P}_{\mathcal{V} \to \mathcal{V}}(i,j)
= \frac{1}{\sqrt{d_i d_j}} \, S_{ij} \,
\Big( I_i I_j \,+\, I_i \,+\, p_i (1-I_i) I_j \Big),    
\end{equation}
The edge-to-node propogation operator  is
\begin{equation}
P_{\mathcal{E} \to \mathcal{V}}(i,e)
= \mu \; \frac{1}{\sqrt{d_i d_e}}
\Big( I_i \,+\, \frac{1}{\alpha_v} (1-I_i) \Big),    
\end{equation}
where $\mu$ is a scaling coefficient, and $\alpha_v$ regulates the relative influence of hyperedges on boundary versus interior nodes.
Analogous to the node update, the feature update for hyperedge $e$ at iteration $k+1$ can be written as
\begin{align}
\label{edgeupdate}
x_e^{(k+1)}
= \underbrace{\sum_{f:\, f \sim e} (\mathcal{P}_{\mathcal{E} \to \mathcal{E}})_{ef} \, x_f^{(k)}}_{\text{edge-to-edge aggregation}}
+ \underbrace{\sum_{i:\, i \sim e} (\mathcal{P}_{\mathcal{V} \to \mathcal{E}})_{ie} \, x_i^{(k)}}_{\text{node-to-edge aggregation}}
+ \gamma_e,    
\end{align}
where $f \sim e$ indicates that hyperedges $e$ and $f$ share at least one common node, and $i \sim e$ indicates that node $i$ belongs to hyperedge $e$.
The entries of the edge-to-edge propagation operator are defined by
\begin{equation}
(\mathcal{P}_{\mathcal{E}\to \mathcal{E}})_{ef} =
\begin{cases}
\dfrac{2}{\sqrt{d_e d_f}} \displaystyle\sum_{i:\, i\in e\cap f}\dfrac{1}{d_i} & e\sim f,\ e\in \mathcal{S}_e,\ f\in \mathcal{S}_e,\\[8pt]
\dfrac{1}{\sqrt{d_e d_f}} \displaystyle\sum_{i:\, i\in e\cap f}\dfrac{1}{d_i} & e\sim f,\ e\in \mathcal{S}_e,\ f\in \partial_{e}\mathcal{S},\\[8pt]
\dfrac{\beta_e}{\alpha_e}\dfrac{1}{\sqrt{d_e d_f}}\displaystyle\sum_{i:\, i\in e\cap f}\dfrac{1}{d_i} & e\sim f,\ e\in \partial_{e}\mathcal{S},\ f\in \mathcal{S}_e,\\[8pt]
0 & \text{otherwise}.
\end{cases}    
\end{equation}
For compactness, define
\begin{equation}
S_{ef} := \sum_{i:\, i \in e\cap f} \frac{1}{d_i},
\qquad
q_e := \frac{\beta_e}{\alpha_e},
\qquad
J_e := \mathbb{I}(e \in \mathcal{S}_e).    
\end{equation}
Then the edge-to-edge propagation operator can be expressed as
\begin{equation}
\mathcal{P}_{\mathcal{E} \to \mathcal{E}}(e,f)
= \frac{1}{\sqrt{d_e d_f}} \, S_{ef}\,
\Big( J_e J_f + J_e + q_e (1-J_e) J_f \Big),    
\end{equation}
Similarly, the contribution from nodes to hyperedge $e$ is given by
\begin{equation}
(P_{\mathcal{V}\to \mathcal{E}})_{ie} =
\begin{cases}
\dfrac{\lambda}{1}\dfrac{1}{\sqrt{d_i d_e}}, & i \in e,\ e\in \mathcal{S}_e,\\[6pt]
\dfrac{\lambda}{\alpha_e}\dfrac{1}{\sqrt{d_i d_e}}, & i \in e,\ e\in \partial_{e}\mathcal{S},\\[6pt]
0, & \text{otherwise}.
\end{cases}    
\end{equation}
Compact form:
\begin{equation}
\mathcal{P}_{\mathcal{V} \to \mathcal{E}}(i,e) = \lambda\frac{1}{{\sqrt{d_i d_e}}}\,
\big( J_e + \frac{1}{\alpha_e}(1-J_e) \big).    
\end{equation}

\section{Datasets and Baselines}
\label{data-base}
\subsection{Datasets}
To demonstrate the generality of our approach, we evaluate it on benchmark datasets spanning different domains and structural types.


\textbf{Cora/DBLP-CA} are co-authorship graphs where nodes are authors and hyperedges are joint publications (Cora-CA from the Cora citation network; DBLP-CA from the DBLP database). \textbf{Citeseer} and \textbf{Pubmed} are co-citation hypergraphs of scientific articles, linking papers that are cited together. \textbf{Congress} models US legislative collaboration with nodes as members and hyperedges as bills connecting sponsors and co-sponsors across chambers. \textbf{Senate} uses the same bill–co-sponsor construction for the Senate, with party labels on nodes. \textbf{House} links representatives who serve on the same committee, with nodes labeled by party.



\subsection{Baselines}
\begin{itemize}
    \item \textbf{ED-HNN} \cite{ICLR22-edhnn} approximates continuous equivariant hypergraph diffusion operators and combines hypergraph star expansions with standard message passing.
    \item \textbf{SheafHyperGNN} \cite{NIPS23-sheafhypergnn} incorporate cellular sheaves into hypergraphs, introducing linear and nonlinear sheaf Laplacians.
    \item  \textbf{PhenomNN} \cite{ICML24-PhenomNN} learns node embeddings as minimizers of hypergraph-regularized energies, trained end-to-end via supervised bilevel optimization.
    \item \textbf{HDS} \cite{ICLR24-HDS} formulates HGNNs as ODE-driven dynamics with control–diffusion steps.
    \item  \textbf{SensHNN} \cite{logc25-SensHNN} mitigates oversquashing by k-hop message passing that permits distant vertices to interact while retaining local cues..
    \item  \textbf{FrameHGNN} \cite{AAAI25-FrameHGNN} uses framelet-based hypergraph convolutions with low/high-pass filters plus residual/identity mappings to curb oversmoothing.
    \item \textbf{KHGNN} \cite{AAAI25-KHGNN} aggregates along k-hop shortest paths between nodes and hyperedges via bisection nested convolutions (on HyperGINE) to model long-range structure.
    \item  \textbf{TF-HNN} \cite{ICLR25-TF-HNN} introduces a training-free message passing module (TF-MP-Module) that precomputes hypergraph structural information during preprocessing. 
\end{itemize}
\section{REPRODUCIBILITY STATEMENT}
\label{reproductiion}
In order to guarantee the reproducibility of our findings, we present a comprehensive list of the hyperparameter settings employed during our experiments. For all datasets, we employed the GeLU activation function, kept the coupling coefficient to be $1$ and utilized LayerNorm. The specific configurations used for our model in the hypergraph node classification task are provided in Table~\ref{table:hyperparameter}.
\vspace{-0.1cm}
\begin{table}[h]
\centering
\vspace{-0.1cm}
\caption{Hyperparameter settings for node classification task.}
\vspace{-0.2cm}
\label{table:hyperparameter}
\resizebox{0.95\linewidth}{!}{
\begin{tabular}{lccccccccc}
\toprule
\textbf{Dataset} & \textbf{n\_layer} & \textbf{hid\_dim} & \textbf{tau} & \textbf{dropout} & \textbf{lr} & \textbf{w\_decay} \\
\midrule
Cora-CA & 3 & 128 & 0.001 & 0.7 & 5e-05 & 5e-07 \\
DBLP-CA & 5 & 128 & 0.01 & 0.7 & 1e-03 & 0.0 \\
Citeseer & 3 & 512 & 0.1 & 0.8 & 5e-03 & 5e-05 \\
Pubmed & 3 & 64 & 0.5 & 0.6 & 5e-03 & 5e-05 \\
Congress & 5 & 512 & 0.1 & 0.35 & 5e-03 & 0.0 \\
House & 5 & 512 & 0.5 & 0.35 & 3e-04 & 0.0 \\
Senate & 5 & 512 & 0.5 & 0.35 & 3e-04 & 0.0 \\
NTU2012 & 3 & 512 & 0.5 & 0.3 & 3e-05 & 0.0 \\
Walmart & 5 & 256 & 0.5 & 0.35 & 3e-04 & 0.0 \\
\bottomrule
\end{tabular}
}
\end{table}

\end{document}